\begin{document}
%
\title{Clustering with Outlier Removal}
%
%
%
%

\author{Hongfu Liu,~\IEEEmembership{Member,~IEEE,}
        Jun Li,~\IEEEmembership{Member,~IEEE,} Yue Wu
        and Yun Fu,~\IEEEmembership{Fellow,~IEEE}
\IEEEcompsocitemizethanks{\IEEEcompsocthanksitem H. Liu is with Department of Computer Science, Brandeis University, Waltham; Email: hongfuliu@brandeis.edu. J. Li (co-first author) is with Laboratory for Computational Physiology at Massachusetts Institute of Technology; Email: junli@mit.edu. Y. Wu and Y. Fu are with Department of Electrical \& Computer Engineering, Northeastern University, Boston; Email: \{yuewu, yunfu\}@ece.neu.edu. \protect
}
\thanks{Manuscript received XXX; revised XXX.}}

%
%

\markboth{IEEE TRANSACTIONS ON KNOWLEDGE AND DATA ENGINEERING, VOL. XX, NO. XX, APRIL 2019}%
{Shell \MakeLowercase{\textit{et al.}}: Bare Demo of IEEEtran.cls for Computer Society Journals}
%



\IEEEtitleabstractindextext{%
\begin{abstract}
Cluster analysis and outlier detection are strongly coupled tasks in data mining area. Cluster structure can be easily destroyed by few outliers; on the contrary, outliers are defined by the concept of cluster, which are recognized as the points belonging to none of the clusters. Unfortunately, most existing studies do not notice the coupled relationship between these two task and handle them separately. In light of this, we consider the joint cluster analysis and outlier detection problem, and propose the Clustering with Outlier Removal (COR) algorithm. Generally speaking, the original space is transformed into the binary space via generating basic partitions in order to define clusters. Then an objective function based Holoentropy is designed to enhance the compactness of each cluster with a few outliers removed. With further analyses on the objective function, only partial of the problem can be handled by K-means optimization. To provide an integrated solution, an auxiliary binary matrix is nontrivally introduced so that COR completely and efficiently solves the challenging problem via a unified K-means{-}{-} with theoretical supports. Extensive experimental results on numerous data sets in various domains demonstrate the effectiveness and efficiency of COR significantly over state-of-the-art methods in terms of cluster validity and outlier detection. Some key factors in COR are further analyzed for practical use. Finally, an application on flight trajectory is provided to demonstrate the effectiveness of COR in the real-world scenario.
\end{abstract}

\begin{IEEEkeywords}
Outlier detection, Clustering, Holoentropy, K-means{-}{-}.
\end{IEEEkeywords}}

\renewcommand{\algorithmicrequire}{\textbf{Input:}}  
\renewcommand{\algorithmicensure}{\textbf{Output:}}  
\newtheorem{lemma}{Lemma}
\newtheorem{remark}{Remark}
\newtheorem{theorem}{Theorem}
\newtheorem{definition}{Definition}

\maketitle

\section{Introduction}
Cluster analysis is a fundamental task in data mining and machine learning area, which aims to separate a bunch of data points into different groups so that similar points are assigned into the same cluster. Although cluster analysis has been studied for long time, it is still catching rising attention in industrial scenarios due to its wide applications, from customer segmentation~\cite{tsai2015customer} to information retrieval~\cite{campos2015survey}, and from recommendation systems~\cite{shepitsen2008personalized} to resource allocation~\cite{grandl2015multi}. Accordingly, cluster analysis has also been extensively explored in the academia. K-means is one of the most representative clustering methods, which seeks $K$ prototypes as the centroids to present the data points with the nearest distance. Spectral clustering designed for graph partition, minimizes the weights of cut edges to obtain disconnected sub-graphs with roughly even sizes. Gaussian mixture model estimates $K$ Gaussian distribution with means and variances to fit the data.


Although tremendous efforts have been devoted in the cluster analysis, most of the existing methods assume that all the data points should be assigned a cluster label. In another word, there are no anomaly data points for during clustering process. Unfortunately, this is not always true, especially for the unsupervised task. The potential anomalies or outliers inevitably degrade the clustering performance. For example, few outliers easily destroy the cluster structure derived from K-means and generate bizarre distributions of Gaussian mixture model. To handle outliers or noisy data, some robust clustering methods have been proposed to recover the clean data. Metric learning aims to learn a robust distance function to resist the outliers~\cite{Davis07ICML}; $L_{1}$ norm is employed to alleviate the negative impact of outliers on the cluster structure~\cite{Ding06ICML}. Beyond these, some methods aim to learn the more effective representation with some constraints. Low-rank representation assumes that the intrinsic or clean data lie in low-dimensional manifold~\cite{Liu10ICML}; subspace sparse clustering explores self-expression property with sparse coefficient for representation learning. Recently, consensus clustering generates basic partition first, and employs the basic partitions as the representation for robust partition. Note that these methods still assign the cluster labels for each data point, rather than explicitly removing anomaly points.

To tackle the negative impacts of outliers during the clustering processing, some unsupervised outlier detection methods have been put forward from different aspects. Usually each data point is calculated a score to identify the outlier degree, returning top $K$ outlier candidates. Local outlier factor is one of the popular density-based methods, where outliers are identified by comparing the local density of the data point and its neighbors\cite{Breunig00SIR}. Similarly, local distance-based outlier detection uses the relative location of an object to its neighbours to determine the degree to which the object deviates from its neighbourhood~\cite{Zhang09PKDD}. Angle-based outlier detection focuses on variance in the angles between the difference vectors of a point to the other points, where the angles of the outliers and other two randomly selected points have some deviations\cite{Kriegel08KDD,Pham12KDD}. Other representative methods include ensemble-based iForest~\cite{Liu08ICDM}, eigenvector-based OPCA~\cite{Lee13TKDE}, cluster-based TONMF~\cite{Kannan17SDM}, and so on.

Although outlier detection methods can be regarded as a pre-process for cluster analysis, outlier detection and cluster analysis are usually conducted as two separated tasks. In fact, they are strongly coupled. Cluster structure can be easily destroyed by few outliers~\cite{Georgogiannis16NIPS}; on the contrary, outliers are defined by the concept of cluster, which are recognized as the points belonging to none of the \mbox{clusters}~\cite{Breunig00SIR}. However, few of the existing works treat the cluster analysis and outlier detection in a unified framework. DBSCAN is one of the pioneering works for density-based cluster analysis with the outlier set as an extra output~\cite{ester1996density}, where all the data points are divided in three categories, core points, border points and outliers according to the density, then the clusters are generated by connecting core points and their affiliated border points. Strictly DBSCAN does not belong to the joint cluster analysis and outlier detection, which identifies and removes the outliers first and then follows the cluster generation. To our best knowledge, K-means{-}{-}~\cite{Chawla13SDM} is the first work along this direction. It aims to detect $o$ outliers and partition the rest points into $K$ clusters, where the instances far away from the nearest centroid are regarded as outliers during clustering process. Since this problem is a discrete optimization problem in essence, it is natural that Langrangian Relaxation (LP)~\cite{Ott14NIPS} formulates the clustering with outliers as an integer programming problem with several constraints, which requires the cluster creation costs as the input parameter. Although these two pioneering works provide new directions for joint clustering and outlier detection, the spherical structure assumption of K-means{-}{-} and the original feature space limit its capacity for complex data analysis, and the setup of input parameters and high time complexity in LP make it infeasible for large-scale data.

In this paper, we focus on the joint cluster analysis and outlier detection problem, and propose the Clustering with Outlier Removal (COR) algorithm. Since the outliers are relied on the concept of clusters, we transform the original space into the partition space via running some clustering algorithms (e.g. K-means) with different parameters to generate a set of different basic partitions. By this means, the continuous data are mapped into a binary space via one hot encoding of basic partitions. In the partition space, an objective function is designed based on Holoentropy~\cite{Wu13TKDE} to increase the compactness of each cluster after some outliers are removed. With further analyses, we transform the partial problem of the objective function into a K-means optimization. To provide a complete and neat solution, an auxiliary binary matrix derived from basic partitions is introduced. Then COR is conducted on the concatenated matrix, which completely and efficiently solves the challenging problem via a unified K-means{-}{-} with theoretical supports. To evaluate the performance of COR, we conduct extensive experiments on numerous data sets in various domains. Compared with K-means{-}{-} and numerous outlier detection methods, COR outperforms rivals over in terms of cluster validity and outlier detection by four metrics. Moreover, we demonstrate the high efficiency of COR, which indicates it is suitable for large-scale and high-dimensional data analysis. Some key factors in COR are further analyzed for practical use. Finally, an application on flight trajectory is provided to demonstrate the effectiveness of COR in the real-world scenario. Here we summarize our major contributions as follows.

\begin{itemize}
  \item To our best knowledge, we are the first to conduct the clustering with outlier removal in the partition space, which achieves simultaneous consensus clustering and outlier detection.
  \item Based on Holoentropy, we design the objective function from the aspect of outlier detection, which is partially solved by K-means clustering. By introducing an auxiliary binary matrix, we completely transform the non K-means clustering problem into a K-means{-}{-} \mbox{optimization} with theoretical supports.
  \item Extensive experimental results demonstrated the effectiveness and efficiency of our proposed COR \mbox{significantly} over the state-of-the-art rivals in terms of cluster \mbox{validity} and outlier detection.
\end{itemize}

The rest of this paper is organized as follows. Section 2 introduces the related work on robust clustering, outlier detection and joint learning. Section 3 provides the preliminary knowledge and our problem formulation. In Section 4, we elaborate the equivalent relationship between our addressed problem and K-means{-}{-} with an augmented matrix. Section 5 delivers a thorough discussion on the relationship among COR and cluster analysis, outlier detection and consensus clustering. Extensive experiments are conducted in Section 6. Finally, we conclude this paper in Section 7.

\section{Related Work}
In this section, we present the related work in terms of robust clustering, consensus clustering, outlier detection, and highlight the difference between existing work and ours.

\subsection{Robust Clustering}
To alleviate the impact of outliers, robust clustering\footnote{The concept of robust clustering means that the partition is robust to outliers, rather than noisy features.} has been proposed from different aspects. From the distance function aspect, metric learning is used to learn a robust metric to measure the similarity between two points by taking the outliers into account~\cite{Davis07ICML,Yi12NIPS}; $L_{1}$ norm models the outliers as the sparse constraint for cluster analysis~\cite{Ding06ICML,Elhamifar13TPAMI}. From the data aspect, the outliers are assigned few weights during clustering process~\cite{Dotto16SC}; low-rank representation treats the data as the clean part and outliers, and constrains the clean part with the lowest rank~\cite{Liu10ICML}. From the model fusion aspect, ensemble clustering integrates different partitions into a consensus one to deliver a robust result~\cite{Strehl02JMLR,Liu17TKDE}. Although these robust clustering methods reduce the negative impacts of outliers on the cluster structure, they fail to explicitly detect or remove outlier points for clustering. In another word, each data point is assigned with a cluster label, even for the outliers.

\subsection{Consensus Clustering}
Consensus clustering, also known as ensemble clustering, targets to integrate several diverse partition results from traditional clustering methods into a consensus one~\cite{Strehl02JMLR}. It has been widely recognized of robustness, consistency, novelty and stability over traditional clustering methods, especially in generating robust partitions, discovering novel structures, handling noisy features, and integrating solutions from multiple sources. The process of consensus clustering generally has two steps: basic partitions generation and consensus fusion. Given basic partitions as input, consensus clustering is in essence a fusion problem rather than a partitioning problem, which seeks for an optimal combinatorial result from basic partitions. Over the past years, many clustering ensemble techniques have been proposed, resulting in various of ways to face the problem together with new fields of application for these techniques. Generally speaking, consensus clustering can be divided into two categories, \emph{i.e.}, those with or without an explicit global objective function. The methods that do not set objective functions make use of some heuristics or meta-heuristics to find approximate solutions. Representative methods include co-association matrix-based~\cite{Fred05TPAMI,Lourenco13ML}, graph-based~\cite{Strehl02JMLR,Fern04ICML}, relabeling and voting based~\cite{Ayad08TPAMI} and locally adaptive cluster-based algorithms~\cite{Domeniconi09TKDD}. On another hand, the methods with explicit objectives employ global objective functions to measure the similarity between basic partitions and the consensus one. Representative solutions include K-means-like algorithm~\cite{Topchy03ICDM}, NMF~\cite{Li07ICDM}, EM algorithm~\cite{Topchy04SDM}, simulated annealing~\cite{Lu08AAAI} and combination regularization~\cite{Xie14KDD}. More information on consensus clustering can be found in the recent survey~\cite{liu2019consensus}.

\subsection{Outlier Detection}
Outlier detection, also known as anomaly detection, seeks the points deviation from others and identifies these points as outliers, where most of the existing studies focus on unsupervised outlier detection. Some criteria are designed to assign a score to each point, and the points with large scores are regarded as the outlier candidates. Some representative methods include density-based LOF\cite{Breunig00SIR}, COF\cite{Tang02PKDD}, distance-based LODF~\cite{Zhang09PKDD}, frequent pattern-based Fp-outlier~\cite{He08CSIS}, angle-based ABOD~\cite{Kriegel08KDD} and its fast version FABOD~\cite{Pham12KDD}, ensemble-based iForest~\cite{Liu08ICDM}, BSOD~\cite{Liu16ICBD}, eigenvector-based OPCA~\cite{Lee13TKDE}, cluster-based TONMF~\cite{Kannan17SDM}. Recently, there are deep learning based outlier detection methods such as deep one-class SVM~\cite{ruff2018deep} and GAN-based methods~\cite{schlegl2017unsupervised,zenati2018efficient,li2018anomaly}, which learns a non-linear transformation to project the original data into hidden space for effective recognition.
However, these methods train the model only with clear data, and predict new data whether they are outliers, which is different from the problem we address here.

\subsection{Joint Clustering and Outlier Detection}
Cluster analysis and outlier detection are consistently hot topics in data mining area; however, they are usually considered as two independent tasks. Although robust clustering resists to the impact of outliers, each point including outliers is assigned the cluster label. Few of the existing works treat the cluster analysis and outlier detection in a unified framework. Two-stage frameworks, such as DBSCAN conduct the outlier detection first, then apply the clustering method for partition, which becomes struggled to handle complex data. K-measn{-}{-}~\cite{Chawla13SDM} detects $o$ outliers and partitions the rest points into $K$ clusters, where the instances with large distance to the nearest centroid are regarded as outliers during the clustering process. Langrangian Relaxation (LP)~\cite{Ott14NIPS} formulates the clustering with outliers as an integer programming problem, which requires the cluster creation costs as the input parameter. This problem has also been theoretically studied in facility location. Charikar \textit{et al.} proposed a bi-criteria approximation algorithm for the facility location with outliers problem~\cite{Charikar01SODA}. Chen proposed a constant factor approximation algorithm for the K-median with outliers problem~\cite{Chen08SODA}.

In this paper, we consider the clustering with outlier removal problem, which partitions the entire data sets into several clusters and one outlier set. Although some pioneering works provide new directions for joint clustering and outlier detection, none of these algorithms expect K-means{-}{-} are amenable to a practical implementation on large data sets, while of theoretical interests. Moreover, the spherical structure assumption of K-means{-}{-} and the original feature space limit its capacity for complex data analysis. In light of this, we transform the original feature space into the partition space, where based on Holoentropy, the COR is designed to achieve simultaneous consensus clustering and outlier detection.

\section{Problem Formulation}
In this section, we first illustrate some preliminary knowledge and elaborate our objective function for clustering and outlier removal.

\subsection{Preliminaries}
Here we introduce some basic knowledge on K-means{-}{-} and Holoentropy.

K-means{-}{-}~\cite{Chawla13SDM} is a variant of K-means, which is particularly designed for handling the sensitivity of K-means on outliers. It is widely recognized that few outliers deviate the centroids from their intrinsic positions. To tackle with this, some data points with far distance to their centroids are regarded as the outlier candidates, which are not assigned with any cluster label and involved into the centroid updating, either. Similar to K-means, K-means{-}{-} also has two iterative stages, data point assignment and centroid updating. During the data point assignment, we calculate the distances between each data point and its nearest centroid, and sort the distances, where the data points with top $o$ largest distances are outlier candidates. For the centroid updating, it is the same with K-means since these outlier candidates are not assigned with cluster labels. It is worthy to note that the outlier candidates are changing during the iteration. Compared with K-means, K-means{-}{-} requires two input parameters, the numbers of clusters and outlier $K$ and $o$. It enjoys many properties as K-means in terms of neat mathematical formulation, model efficiency and convergence.

As pointed out by Ref~\cite{Wu13TKDE}, it is not suitable to only employ entropy or total correlation for outlier detection. They proposes a new measure Holoentropy as follows.

\begin{definition}[Holoentropy]\label{def:holo} Holoentropy $HL(\mathcal{Y})$ is defined as the sum of the entropy and the total correlation of the random vector $\mathcal{Y}$, and can be expressed by the sum of the entropies on all attributes.
\end{definition}

Holoentropy is an outlier detection metric based on information theory, which handles the categorical data and takes both entropy and total correlation into consideration. In the rest of this paper, we elaborate our proposed objective function based on Holoentropy, and derive its to K-means{-}{-} algorithm for a neat and efficient solution.

\subsection{Objective Function}
Cluster analysis and outlier detection are closely coupled tasks. Cluster structure can be easily destroyed by few outlier points; on the contrary, outliers are defined by the concept of cluster, which are recognized as the points belonging to none of the clusters. To cope with this challenge, we focus on the Clustering with Outlier Removal (COR). Specifically, the outlier detection and clustering tasks are jointly conducted, where $o$ points are detected as the outliers and the rest instances are partitioned into $K$ clusters. Table~\ref{tab:notation} shows the notations used in the following sections.

The cluster structure is vulnerable to few outliers, and outliers request to be identified with cluster boundary. The coupling relationship among cluster analysis and outlier detection makes it like a chicken-and-egg problem. To escape the chicken-and-egg problem in joint clustering and outlier detection, we are inspired by consensus clustering~\cite{Strehl02JMLR,Fred05TPAMI,Domeniconi09TKDD}, which incorporates several basic partitions generated from the data for a robust fusion to alleviate the negative effects from outliers. Moreover, the definition of outliers relies on the clusters. The above two points motivate us to transform the data from the original feature space into partition space via generating several basic partitions. This process is similar to generate basic partitions in consensus clustering~\cite{Liu15KDD,Liu16KDD}. Let $X$ denote the data matrix with $n$ points and $d$ features. A partition of $X$ into $K$ crisp clusters can be represented as a collection of $K$ subsets of objects with a label vector $\pi=(L_\pi(x_1),\cdots,L_\pi(x_n)),1\le l\le n$, where $L_\pi(x_l)$ maps $x_l$ to one of the $K$ labels in $\{1,2,\cdots,K\}$. Some basic partition generation strategy, such as K-means clustering with different cluster numbers can be applied to obtain $r$ basic partitions $\Pi=\{\pi_i\},1\le i \le r$. Let $K_i$ denote the cluster number for $\pi_i$ and $R = \sum_{i=1}^rK_i$. Then a binary matrix $B=\{b_l\},1\le l\le n$ can be derived from $\Pi$ as follows:
\begin{equation}\label{eq:binary}
\begin{split}
b_l&=(
b_{l,1},\cdots,b_{l,i},\cdots,b_{l,r}),~\textrm{with} \\
b_{l,i}&=( b_{l,i1},\cdots,b_{l,ij},\cdots,b_{l,iKi}),~\textrm{and} \\
b_{l,ij}&=\left\{
  \begin{array}{ll}
    1,&\textrm{if}~L_{\pi_i}(x_l)=j\\
    0,&\textrm{otherwise}
  \end{array}
\right..
\end{split}
\end{equation}

It is worthy to note that we do not require a specific algorithm to generate basic partitions. For the sake of simplicity and efficiency, K-means with different cluster numbers are recommended to generated basic partitions. Although K-means is vulnerable to outliers, our COR still delivers promising results based on the basic partitions generated by K-means. The benefits to transform the original space into the partition space lie in (1) the binary value indicates the cluster-belonging information, which is particularly designed according to the definition of outliers, and (2) compared with the continuous space, the binary space is much easier to identify the outliers due to the categorical features. For example, Holoentropy is a widely used outlier detection metric for categorical data~\cite{Wu13TKDE}.

\begin{table}[t!]
  \caption{The Contingency Matrix}\vspace{-0.2cm}
  \small
  \centering
  \begin{tabular}{ccl}
  \hline
    Notation & Domain  & Description\\
  \hline
    $n$ & $\mathcal{Z}$ & Number of instances\\
    $d$ & $\mathcal{Z}$ & Number of features\\
    $K$ & $\mathcal{Z}$ & Number of clusters\\
    $o$ & $\mathcal{Z}$ & Number of outliers\\
    $r$ & $\mathcal{Z}$ & Number of basic partitions\\
    $X$ & $\mathcal{R}^{\{n\times d\}}$ & Data set\\
    $O$ & $\mathcal{R}^{\{o\times d\}}$ & Outlier set\\
    $\Pi$ & $\mathcal{Z}^{\{n\times r\}}$& Set of basic partitions\\
    $B$ & $\{0,1\}^{n\times R}$ & Binary matrix derived from $\Pi$\\
  \hline
  \end{tabular} \label{tab:notation}\vspace{-2mm}
\end{table}

In Ref~\cite{Wu13TKDE}, the authors aimed to minimize the Holoentropy of the data set with $o$ outliers removed. Here we assume there exists the cluster structure within the whole data set. Therefore, it is more reasonable to minimize the Holoentropy of each cluster. In such a way, the clusters become compact after the outliers are removed, rather than the entire data set. Therefore, based on Holoentropy of each cluster, we give our objective function of COR as follows.
\begin{equation}\label{eq:obj-outlier}
  \min_{\pi} \sum_{k=1}^K p_{k}HL(C_k),
\end{equation}
where $HL(\cdot)$ is defined in Definition 1, $\pi$ is the cluster indicator, including $K$ clusters $C_1 \cup \cdots \cup C_{K} = X\backslash O$, with $C_k \cap C_{k'} = \emptyset$ if $k\neq k'$ and $p_{k+} = |C_{k}|/(n-o)$. Actually, the objective function in Eq.~\eqref{eq:obj-outlier} is the summation of weighted Holoentropy of each cluster, where the weight $p_{k}$ is proportional to the cluster size. Here the number of cluster $K$ and the number of outliers $o$ are two parameters of our proposed algorithm, which is the same setting with K-means{-}{-}~\cite{Chawla13SDM}, and we treat determining $K$ and $o$ as an orthogonal problem beyond this paper. In the next section, we provide an efficient solution for COR by introducing another auxiliary binary matrix.

\section{Clustering with Outlier Removal}
To solve the problem in Eq.~\eqref{eq:obj-outlier}, we provide a detailed objective function on the binary matrix $B$ as follows.
\begin{equation}\label{eq:obj-outlier2}
\begin{split}
  &\sum_{k=1}^K p_{k}HL(C_k) \propto \sum_{k=1}^K \sum_{b_{l} \in C_k}\sum_{i=1}^r\sum_{j=1}^{K_i} H(C_{k,ij}),\textrm{and} \\
  &H(C_{k,ij})=-(1-p_{k,ij})\log (1-p_{k,ij})-p_{k,ij}\log p_{k,ij},\\
\end{split}
\end{equation}
where $H$ denotes the Shannon entropy and $p_{k,ij}$ denotes the probability of $b_{l,ij}=1$ in the $ij$-th column of $C_k$.

To better understand the meaning of $p_{k,ij}$ in Eq.~\eqref{eq:obj-outlier2}, we provide the following lemma.
\begin{lemma}\label{lem:mk}
For K-means clustering on the binary data set $B$, the $k$-th centroid satisfies
\begin{equation}\label{eq:mk}
\begin{split}
m_{k} &=(m_{k,1},\cdots,m_{k,i},\cdots,m_{k,r}), ~\text{with}\\
m_{k,i} &=(m_{k,i1}, \cdots m_{k,ij} \cdots m_{k,iK_i}),~\textrm{and}\\
m_{k,ij} &= \sum_{b_{l,ij} \in C_k} b_{l,ij}/|C_k| = p_{k,ij}, \forall~k,i,j.\\
\end{split}
\end{equation}
\end{lemma}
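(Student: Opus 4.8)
The plan is to reduce the claim to the standard fact that a K-means centroid is the arithmetic mean of its cluster members, and then to exploit the binary nature of $B$ to identify this mean with the empirical frequency $p_{k,ij}$.

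First I would recall the centroid-update step of K-means: for a fixed assignment of the rows of $B$ to clusters, the centroid $m_k$ is chosen to minimize the within-cluster dispersion $\sum_{b_l\in C_k}\|b_l-m_k\|^2$. Since this is a separable sum of convex quadratics in the entries of $m_k$, setting the gradient to zero yields the componentwise mean $m_k=\frac{1}{|C_k|}\sum_{b_l\in C_k}b_l$. Reading off the $ij$-th coordinate, using the block structure of $b_l$ inherited from Eq.~\eqref{eq:binary}, gives $m_{k,ij}=\frac{1}{|C_k|}\sum_{b_l\in C_k}b_{l,ij}$, which is exactly the middle expression in Eq.~\eqref{eq:mk}.

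The only substantive step is the final identification. By the construction in Eq.~\eqref{eq:binary} every entry satisfies $b_{l,ij}\in\{0,1\}$, so the numerator $\sum_{b_l\in C_k}b_{l,ij}$ simply counts the points of $C_k$ whose $ij$-th coordinate equals $1$. Dividing by $|C_k|$ therefore gives the fraction of rows in $C_k$ with a $1$ in column $ij$, which is by definition the probability $p_{k,ij}$ that $b_{l,ij}=1$ within the $ij$-th column of $C_k$. Hence $m_{k,ij}=p_{k,ij}$ for all $k,i,j$, establishing the lemma.

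I anticipate no genuine obstacle here; the result is essentially a restatement of the K-means centroid formula specialized to indicator-encoded data. The only care needed is bookkeeping, namely keeping the two-level index $(i,j)$ consistent between the centroid's block decomposition and the column indexing of $B$. It is also worth noting that the one-hot constraint $\sum_{j}b_{l,ij}=1$ per basic partition is not actually required for the identity, since $m_{k,ij}=p_{k,ij}$ holds coordinatewise for any $\{0,1\}$-valued data.
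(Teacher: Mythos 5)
Your proposal is correct and matches the paper's argument: the paper simply notes that the lemma is self-evident from the fact that the K-means centroid is the arithmetic mean of its cluster members, and you have merely expanded that one-line justification by deriving the mean from the first-order condition and observing that the mean of $\{0,1\}$-valued entries is the empirical frequency $p_{k,ij}$. No gap; your closing remark that the one-hot constraint is not needed for the identity is a correct (if unstated in the paper) observation.
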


The proof of Lemma~\ref{lem:mk} is self-evident according to the arithmetic mean of the centroid in K-means clustering. Based on Lemma~\ref{lem:mk}, we uncover the bridge between the problem in Eq.~\eqref{eq:obj-outlier2} and K-means clustering on the binary matrix $B$.

\begin{theorem}\label{the:second}
If K-means is conducted on $n-o$ inliers of the binary matrix $B$, we have
\begin{equation}\label{eq:equal}
\begin{split}
&\max \sum_{k=1}^K p_k\sum_{i=1}^r\sum_{j=1}^{K_i} p_{k,ij}\log p_{k,ij} \Leftrightarrow \min \sum_{k=1}^K \sum_{b_l \in C_k} f(b_l,m_{k}),\\
\end{split}
\end{equation}
where $m_{k}$ is the $k$-th centroid by Eq.~\eqref{eq:mk} and the distance function $f(b_l,m_{k})=\sum_{i=1}^r\sum_{j=1}^{K_i} D_{\textrm{KL}}(b_{l,ij}||m_{k,ij})$, here $D_{\textrm{KL}}(\cdot || \cdot)$ is the KL-divergence.
\end{theorem}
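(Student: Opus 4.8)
The plan is to prove the stated equivalence by showing that, once the centroids are fixed to the means supplied by Lemma~\ref{lem:mk}, the K-means objective on the right-hand side of Eq.~\eqref{eq:equal} is exactly a negative constant multiple of the quantity being maximized on the left-hand side; since that constant is positive and independent of the partition $\pi$, the minimizer of one problem is the maximizer of the other. The whole argument is therefore a careful bookkeeping of one algebraic reduction together with one conceptual justification.

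First I would substitute $m_{k,ij}=p_{k,ij}$ from Eq.~\eqref{eq:mk} and expand the KL-divergence distance. Reading $\sum_{j} D_{\textrm{KL}}(b_{l,ij}||m_{k,ij})$ as the KL divergence $\sum_{j} b_{l,ij}\log(b_{l,ij}/m_{k,ij})$ between the one-hot row $b_{l,i}$ and the centroid $m_{k,i}$ (both valid distributions over the $K_i$ categories of $\pi_i$, since $\sum_j m_{k,ij}=\sum_j p_{k,ij}=1$), and using that every entry $b_{l,ij}\in\{0,1\}$ forces $b_{l,ij}\log b_{l,ij}=0$, each summand collapses to $-b_{l,ij}\log m_{k,ij}$, so that $f(b_l,m_k)=-\sum_{i=1}^r\sum_{j=1}^{K_i} b_{l,ij}\log m_{k,ij}$. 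I would then interchange the order of summation in $\sum_{k}\sum_{b_l\in C_k} f(b_l,m_k)$ and aggregate over the points of each cluster: because $\log m_{k,ij}$ does not depend on $l$, the inner sum yields $\sum_{b_l\in C_k} b_{l,ij}=|C_k|\,m_{k,ij}=|C_k|\,p_{k,ij}$ by Lemma~\ref{lem:mk}. Substituting $m_{k,ij}=p_{k,ij}$ and $|C_k|=(n-o)\,p_k$ gives $\sum_{k}\sum_{b_l\in C_k} f(b_l,m_k)=-(n-o)\sum_{k=1}^K p_k\sum_{i=1}^r\sum_{j=1}^{K_i} p_{k,ij}\log p_{k,ij}$, which is precisely $-(n-o)$ times the left-hand objective. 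As $(n-o)>0$ is fixed over all partitions, minimizing the right-hand side coincides with maximizing the left-hand side, which is Eq.~\eqref{eq:equal}.

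The main obstacle I anticipate is not the algebra but justifying that the mean in Lemma~\ref{lem:mk} is the correct representative for a KL-based assignment, so that the right-hand side is a genuine K-means(-style) objective rather than an identity that holds only at an externally fixed $m_k$. The clean way around this is to observe that KL divergence is a Bregman divergence, for which the per-cluster loss $\sum_{b_l\in C_k} D_{\textrm{KL}}(b_{l,i}||m_{k,i})$ is, at fixed assignment, uniquely minimized by the arithmetic mean $m_{k,ij}=p_{k,ij}$; hence the centroid-update step of Bregman/KL K-means reproduces Eq.~\eqref{eq:mk}, and the reduction above is valid at every iterate, not merely as a coincidence of notation. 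A secondary point worth stating explicitly is the convention $0\log 0=0$, which keeps both sides well defined when a column $ij$ is empty within $C_k$: there $p_{k,ij}=0$, no point contributes a divergent term, and the corresponding entropy summand vanishes. With these two remarks in place, the equivalence follows immediately from the displayed identity.
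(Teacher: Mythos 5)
Your proof is correct, and it reaches the paper's conclusion by a more elementary route. The paper invokes the Bregman three-point identity $D_{\textrm{KL}}(s\|t)=H(t)-H(s)+(s-t)^\top\nabla H(t)$, kills the cross term via $\sum_{b_l\in C_k}(b_{l,ij}-m_{k,ij})=0$, and discards $\sum H(b_{l,ij})$ as a constant, leaving $\sum_k|C_k|\sum_{i,j}H(m_{k,ij})$; you instead expand the KL divergence directly, use the one-hot structure of $B$ to collapse $f(b_l,m_k)$ to $-\sum_{i,j}b_{l,ij}\log m_{k,ij}$, and swap the order of summation to obtain the exact identity $\sum_k\sum_{b_l\in C_k}f(b_l,m_k)=-(n-o)\sum_k p_k\sum_{i,j}p_{k,ij}\log p_{k,ij}$. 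The two computations are two faces of the same fact (both hinge on Lemma~\ref{lem:mk} and the binariness of $B$), but yours buys an explicit proportionality constant $-(n-o)$ rather than ``equal up to a constant,'' and it makes visible that the discarded term $\sum H(b_{l,ij})$ is identically zero --- a point worth being explicit about, since that sum ranges only over the $n-o$ inliers and so would \emph{not} be partition-independent if it were nonzero. You also supply two justifications the paper leaves implicit: that the arithmetic mean of Lemma~\ref{lem:mk} is the genuine minimizer of the per-cluster KL loss (so the identity holds at every iterate of a Bregman K-means, not just at an externally imposed centroid), and the $0\log 0=0$ convention that keeps empty columns harmless. Both additions strengthen rather than deviate from the paper's argument.
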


\begin{proof}
According to the Bregman divergence~\cite{Banerjee05JMLR}, we have $D_{\textrm{KL}}(s||t) = H(t) - H(s) + (s-t)^\top\nabla H(t)$, where $s$ and $t$ are two vectors with the same length. Then we start on the right side of Eq.~\eqref{eq:equal}.
\begin{equation}
\begin{split}
&\sum_{k=1}^K \sum_{b_l \in C_k} f(b_l,m_{k})\\
=&\sum_{k=1}^K \sum_{b_l \in C_k}\sum_{i=1}^r\sum_{j=1}^{K_i} (H(m_{k,ij})-H(b_{l,ij})\\
&+(b_{l,ij}-m_{k,ij})^\top\nabla H(m_{k,ij}))\\
=&\sum_{k=1}^K |C_k|\sum_{i=1}^r\sum_{j=1}^{K_i}H(m_{k,ij}) -\sum_{k=1}^K \sum_{b_l \in C_k}\sum_{i=1}^r\sum_{j=1}^{K_i}H(b_{l,ij}).\\
\end{split}
\end{equation}
The above equation holds due to $\sum_{b_l \in C_k}(b_{l,ij}-m_{k,ij})=0$, and the second term is a constant given the binary matrix $B$. According to Lemma~\ref{lem:mk}, we finish the proof.
\end{proof}

\begin{remark}
Theorem~\ref{the:second} uncovers the equivalent relationship between the second part in Eq.~\eqref{eq:obj-outlier2} and K-means on the binary matrix $B$. By this means, some part of this complex problem can be efficiently solved by the simple K-means clustering with KL-divergence on each dimension.
\end{remark}

Although Theorem~\ref{the:second} formulates the second part in Eq.~\eqref{eq:obj-outlier2} into a K-means optimization problem on the binary matrix $B$, there still remains two challenges. (1) The first part in Eq.~\eqref{eq:obj-outlier2} is difficult to formulate into a K-means objective function, and (2) Lemma~\ref{lem:mk} and Theorem~\ref{the:second} are conducted on $n-o$ inliers, rather than the whole matrix $B$. In the following, we focus on these two challenges, respectively.

The second part in Eq.~\eqref{eq:obj-outlier2} can be solved by K-means clustering, which inspires us to make efforts in order to transform the complete problem into the K-means solution. Since $1-p_{k,ij}$ is difficult involved into the K-means clustering by Theorem~\ref{the:second}, which means $1-p_{k,ij}$ cannot be modeled by the binary matrix $B$, here we aim to model it by introducing another binary matrix $\widetilde{B}=\{\widetilde{b}_l\}, 1\leq l\leq n$ as follows.
\begin{equation}\label{eq:binary2}
\begin{split}
\widetilde{b}_l&=(
\widetilde{b}_{l,1},\cdots,\widetilde{b}_{l,i},\cdots,\widetilde{b}_{l,r}),~\textrm{with} \\
\widetilde{b}_{l,i}&=(\widetilde{b}_{l,i1},\cdots,\widetilde{b}_{l,ij},\cdots,\widetilde{b}_{l,iKi}),~\textrm{and} \\
\widetilde{b}_{l,ij}&=\left\{
  \begin{array}{ll}
    0,&\textrm{if}~L_{\pi_i}(x_l)=j\\
    1,&\textrm{otherwise}
  \end{array}
\right..
\end{split}
\end{equation}

From Eq.~\eqref{eq:binary2}, $\widetilde{B}$ is also derived from $\Pi$. Compared with the binary matrix $B$ in Eq.~\eqref{eq:binary}, $\widetilde{B}$ can be regarded as the flip of $B$. In fact, $B$ and $\widetilde{B}$ are the 1-of-$K_i$ and ($K_i$-1)-of-$K_i$ codings of the original data, respectively. Based on $\widetilde{B}$, we can define $\widetilde{m}_{k}$ according to Eq.~\eqref{eq:mk}, then we have $\widetilde{m}_{k,ij}=1-{m}_{k,ij} = 1-p_{k,ij}$.

Based on the binary matrices $B$ and $\widetilde{B}$, we transform the problem in Eq.~\eqref{eq:obj-outlier2} into a unified K-means optimization by the following theorem.

\begin{algorithm}[t!]
\caption{Clustering with Outlier Removal} \label{alg}
\begin{algorithmic}[1]
\REQUIRE
$X$: data matrix; \\
\ \ \ \ \ \ $K, o, r$: number of clusters, outliers, basic partitions.\\
\ENSURE
$K$ clusters $C_1, \cdots C_K$ and outlier set $O$;
\STATE Generate $r$ basic partitions from $X$;
\STATE Build the binary matrices $B$ and $\widetilde{B}$ by Eq.~\eqref{eq:binary}\&\eqref{eq:binary2};
\STATE Initialize $K$ centroids from $[B\ \widetilde{B}]$;
\REPEAT
\STATE Calculate the distance between each point in $[B\ \widetilde{B}]$ and its nearest centroid;
\STATE Identify $o$ points with largest distance as outliers;
\STATE Assign the rest $n-o$ points to their nearest centroids;
\STATE Update the centroids by arithmetic mean;
\UNTIL the objective value in Eq.~\eqref{eq:obj-outlier} remains unchanged.
\end{algorithmic}
\end{algorithm}

\begin{theorem}\label{the:one}
If K-means is conducted on $n-o$ inliers of the binary matrix $[B\ \widetilde{B}]$, we have
\begin{equation}\nonumber
\min_{\pi} \sum_{k=1}^K p_{k}HL(C_k)\Leftrightarrow \min \sum_{k=1}^K \sum_{b_l \in C_k} (f(b_l,m_{k}) + f(\widetilde{b}_l,\widetilde{m}_{k})),
\end{equation}
where $m_{k}$, $\widetilde{m}_{k}$ are the $k$-th centroid by Eq.~\eqref{eq:mk}, and the distance function $f(b_{l},m_{k})=\sum_{i=1}^r\sum_{j=1}^{K_i} D_{\textrm{KL}}(b_{l,ij}||m_{k,ij})$, $f(\widetilde{b}_{l},\widetilde{m}_{k})=\sum_{i=1}^r\sum_{j=1}^{K_i} D_{\textrm{KL}}(\widetilde{b}_{l,ij}||\widetilde{m}_{k,ij})$, and $D_{\textrm{KL}}(\cdot || \cdot)$ is the KL-divergence.
\end{theorem}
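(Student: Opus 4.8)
The plan is to reduce Theorem~\ref{the:one} to two applications of the computation already used for Theorem~\ref{the:second}, one on the $B$-block and one on the $\widetilde{B}$-block of the concatenated matrix $[B\ \widetilde{B}]$, and then to add the two halves so as to recover the complete per-attribute entropy $H(C_{k,ij})=-(1-p_{k,ij})\log(1-p_{k,ij})-p_{k,ij}\log p_{k,ij}$ of Eq.~\eqref{eq:obj-outlier2}. The starting observation is that, when K-means is run on $[B\ \widetilde{B}]$ over the $n-o$ inliers, the centroid of cluster $C_k$ is the coordinate-wise arithmetic mean, so it equals $[m_k\ \widetilde{m}_k]$ with $m_{k,ij}=p_{k,ij}$ by Lemma~\ref{lem:mk} and, because $\widetilde{b}_{l,ij}=1-b_{l,ij}$ for every $l$, with $\widetilde{m}_{k,ij}=1-p_{k,ij}$ as recorded after Eq.~\eqref{eq:binary2}. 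Since the KL dissimilarity is summed coordinate-wise over all $2R$ columns, the distance of $[b_l\ \widetilde{b}_l]$ to its centroid splits additively into $f(b_l,m_k)+f(\widetilde{b}_l,\widetilde{m}_k)$, which is exactly the right-hand objective; hence it suffices to evaluate the two block sums separately.

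First I would reuse Theorem~\ref{the:second} verbatim on the $B$-block: the Bregman identity $D_{\textrm{KL}}(s||t)=H(t)-H(s)+(s-t)^\top\nabla H(t)$ together with $\sum_{b_l\in C_k}(b_{l,ij}-m_{k,ij})=0$ collapses $\sum_{k}\sum_{b_l\in C_k} f(b_l,m_k)$ to $\sum_k |C_k|\sum_{i,j}H(m_{k,ij})$ minus a constant fixed by $B$, i.e.\ to $(n-o)\sum_k p_k\sum_{i,j}\bigl(-p_{k,ij}\log p_{k,ij}\bigr)$ up to an additive constant. Next I would run the identical computation on the $\widetilde{B}$-block, noting that $\widetilde{B}$ is again a $\{0,1\}$ matrix whose cluster centroids are $\widetilde{m}_{k,ij}$: the cross term $\sum_{b_l\in C_k}(\widetilde{b}_{l,ij}-\widetilde{m}_{k,ij})$ vanishes for the same reason and the $H(\widetilde{b}_{l,ij})$ term is constant given $\widetilde{B}$, leaving $(n-o)\sum_k p_k\sum_{i,j}H(\widetilde{m}_{k,ij})$; substituting $\widetilde{m}_{k,ij}=1-p_{k,ij}$ converts this into $(n-o)\sum_k p_k\sum_{i,j}\bigl(-(1-p_{k,ij})\log(1-p_{k,ij})\bigr)$ up to a constant. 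Adding the two block results and using $\sum_k\sum_{b_l\in C_k}(\cdot)=(n-o)\sum_k p_k(\cdot)$ recovers $(n-o)\sum_k p_k\sum_{i,j}H(C_{k,ij})$, which by Eq.~\eqref{eq:obj-outlier2} is proportional to $\sum_k p_k HL(C_k)$.

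Because the factor $(n-o)$ is fixed and the two leftover terms $\sum H(b_{l,ij})$ and $\sum H(\widetilde{b}_{l,ij})$ depend only on the given matrices $B$ and $\widetilde{B}$, neither affects the minimizer, so minimizing the right-hand K-means objective over partitions of the $n-o$ inliers is equivalent to minimizing $\sum_k p_k HL(C_k)$, which establishes the stated $\Leftrightarrow$. I expect the only real obstacle to be conceptual rather than computational: recognizing that the single coding $B$ realizes only the $-p_{k,ij}\log p_{k,ij}$ half of the entropy through Theorem~\ref{the:second}, and that the complementary $(K_i{-}1)$-of-$K_i$ coding $\widetilde{B}$ is precisely the device that makes $-(1-p_{k,ij})\log(1-p_{k,ij})$ appear as a genuine K-means term, via the flip relation $\widetilde{m}_{k,ij}=1-p_{k,ij}$. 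Once the additive splitting of the distance on $[B\ \widetilde{B}]$ is in hand, the rest is exactly the bookkeeping of Theorem~\ref{the:second} applied a second time. A minor point worth stating explicitly is that the equivalence is an identity of objective values for any fixed inlier set and partition, so it is preserved throughout the alternating assignment, outlier-identification, and centroid-update steps of the unified K-means{-}{-} in Algorithm~\ref{alg}.
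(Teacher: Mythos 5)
Your proposal is correct and matches the paper's intended argument exactly: the paper omits a formal proof of Theorem~\ref{the:one}, but its Remark makes clear that the route is precisely yours --- apply the Bregman-divergence computation of Theorem~\ref{the:second} separately to the $B$-block and the flipped $\widetilde{B}$-block, use $\widetilde{m}_{k,ij}=1-p_{k,ij}$, and add the two halves to recover the full per-attribute entropy $H(C_{k,ij})$ up to constants that do not affect the minimizer. Your closing observations (the additive splitting of the coordinate-wise KL distance on $[B\ \widetilde{B}]$ and the fixed-inlier-set caveat) are exactly the right points to make explicit.
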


\begin{remark}
The problem in Eq.~\eqref{eq:obj-outlier2} cannot be solved via K-means on the binary matrix $B$. Nontrivially, we introduce the auxiliary binary matrix $\widetilde{B}$, a flip of $B$, in order to model $1-p_{k,ij}$. By this means, the complete problem can be formulated by K-means clustering on the concatenated binary matrix $[B\ \widetilde{B}]$ in Theorem~\ref{the:one}. The benefits not only lie in simplifying the problem with a neat mathematical formulation, but also inherit the efficiency from K-means, which is suitable for large-scale data clustering with outlier removal.
\end{remark}

Theorem~\ref{the:one} completely solves the first challenge that the problem in Eq.~\eqref{eq:obj-outlier} with inliers with the auxiliary matrix $\widetilde{B}$. This makes a partial K-means solution into a complete K-means solution. In the following, we handle the second challenge, which conducts on the entire data points, rather than $n-o$ inliers.

In this paper, we consider the clustering with outlier removal, which simultaneously partitions the data and discovers outliers. That means the outlier detection and clustering are conducted in a unified framework. Since the centroids in K-means clustering are vulnerable to outliers, these outliers should not contribute to the centroids. Inspired by \mbox{K-means{-}{-}}~\cite{Chawla13SDM}, the outliers are identified as the points with large distance to the nearest centroid. The major difference is that K-means{-}{-} is proposed on the original feature space, while our problem starts from the Holoentropy on the partition space, and we formulate the problem into a K-means optimization with the auxiliary matrix $\widetilde{B}$. After delicate transformation and derivation, \mbox{K-means{-}{-}} is used as a tool to solve the problem in Eq.~\eqref{eq:obj-outlier}, which returns $K$ clusters $C_1, \cdots, C_K$ and outlier set $O$. The complete process of our proposed clustering with outlier removal is summarized in Algorithm~\ref{alg}.


Next, we analyze the property of Algorithm~\ref{alg} in terms of time complexity and convergence. In Line-1, we first generate $r$ basic partitions, which are usually finished by K-means clustering with different cluster numbers. This step takes $\mathcal{O}(rt'\overline{K}nd)$, where $t'$ and $\overline{K}$ are the average iteration number and cluster number, respectively. Line 5-8 denotes the standard K-means{-}{-} algorithm, which has the similar time complexity $\mathcal{O}(tKnR)$, where $R = \sum_{i=1}^rK_i$ is the dimension of the binary matrix $B$ and $\widetilde{B}$. It is worthy to note that only $R$ elements are non-zero in $[B\ \widetilde{B}]$. In Line 6, we find $o$ points with largest distances, rather than sorting $n$ points so that it can be achieved with $\mathcal{O}(n)$. It is worthy to note that $r$ basic partitions can be generated via parallel computing, which dramatically decreases the execution time. Moreover, $t'$, $t$, $r$ and $R$ are relatively small compared with the number of points $n$. Therefore, the time complexity of our algorithm is roughly linear to the number of points, which easily scales up for big data clustering with outliers. Moreover, Algorithm~\ref{alg} is also guaranteed to converge to a local optimum by the following theorem.

\begin{theorem}\label{the:coverge}
Algorithm~\ref{alg} converges to a local optimum.
\end{theorem}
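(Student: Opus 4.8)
The plan is to reduce the claim entirely to the convergence of K-means{-}{-} on the concatenated binary matrix $[B\ \widetilde{B}]$. By Theorem~\ref{the:one}, minimizing the COR objective $\sum_{k=1}^K p_k HL(C_k)$ is equivalent to minimizing the K-means{-}{-} objective
\begin{equation}\nonumber
J = \sum_{k=1}^K \sum_{b_l \in C_k} \bigl(f(b_l,m_k) + f(\widetilde{b}_l,\widetilde{m}_k)\bigr)
\end{equation}
over the $n-o$ inliers, where $f$ decomposes coordinatewise into KL-divergences. Hence it suffices to show that the alternating steps in Lines~5--8 of Algorithm~\ref{alg} leave $J$ non-increasing, and that $J$ can assume only finitely many values.

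Next I would verify that each step is monotone. For the centroid update (Line~8), fix the inlier clusters $C_1,\dots,C_K$ and the outlier set $O$. Since $f$ is a sum of per-coordinate KL-divergences and KL-divergence is a Bregman divergence, the arithmetic mean is the unique minimizer of $\sum_{b_l\in C_k} f(b_l,m)$ over $m$ (the Bregman-representative property of Ref~\cite{Banerjee05JMLR}), and likewise for $\widetilde m_k$ on $\widetilde B$; as Line~8 sets the centroids to exactly these means (Lemma~\ref{lem:mk}), the update cannot increase $J$. For the joint outlier-identification and assignment step (Lines~5--7), fix the centroids. For fixed centroids, $J$ is minimized over all choices of an assignment together with a size-$o$ outlier set by (i) assigning each point to its nearest centroid and (ii) declaring the $o$ points of largest nearest-centroid distance to be outliers — which is precisely Lines~5--7. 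Thus this step also cannot increase $J$.

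I would then close with a finiteness argument. The value of $J$ is completely determined by the pair (partition of inliers, outlier set), because the centroids are then fixed as arithmetic means; there are only finitely many such configurations, so $J$ ranges over a finite set. Being non-increasing along the iterations and bounded below by $0$ (KL-divergences are non-negative), $J$ must stabilize after finitely many iterations, at which point no step alters the configuration — a fixed point, i.e. a local optimum.

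The hard part will be the two optimality claims rather than the bookkeeping. The centroid step relies on the Bregman-mean property, which applies because $f$ splits into per-coordinate KL terms with coordinatewise minimizer $m_{k,ij}=p_{k,ij}$ (and $\widetilde m_{k,ij}=1-p_{k,ij}$); one must note that the single arithmetic-mean update on $[B\ \widetilde{B}]$ automatically produces consistent centroids $\widetilde m_{k,ij}=1-m_{k,ij}$ (since $\widetilde B$ is the flip of $B$), so both terms of $J$ are optimized simultaneously by one update rather than by two conflicting ones. The outlier step needs the elementary exchange argument that, among all size-$o$ removals, discarding the farthest points yields the smallest residual sum; here care is only required to fix a deterministic tie-breaking rule so that $J$ strictly decreases whenever the configuration genuinely changes, which together with finiteness precludes cycling.
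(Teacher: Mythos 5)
Your proposal is correct and takes essentially the same route as the paper: the paper's entire proof is the single sentence that the result ``holds due to the good convergence of K-means{-}{-}'', i.e.\ it delegates everything to the known monotone-descent argument for K-means{-}{-} on $[B\ \widetilde{B}]$ via Theorem~\ref{the:one}. Your write-up simply supplies the details the paper leaves implicit (Bregman-mean optimality of the centroid step, the exchange argument for the farthest-$o$ outlier selection, and finiteness of configurations), all of which are sound.
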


The proof holds due to the good convergence of K-means{-}{-}.



\section{Discussions}
In this section, we launch several discussions on clustering with outlier removal. Generally speaking, we elaborate it in terms of the traditional clustering, outlier detection and consensus clustering.

\textit{Comparison with cluster analysis.} Traditional cluster analysis aims to separate a bunch of points into different groups that the points in the same cluster are similar to each other. Each point is assigned with a hard or soft label. Although robust clustering is put forward to alleviate the impact of outliers, each point including outliers are assigned the cluster label. Differently, the problem we address here, clustering with outlier removal only assigns the labels for inliers and discovers the outlier set. Technically speaking, our COR belongs to the non-exhaustive clustering, where not all data points are assigned labels and some data points might belong to multiple clusters. NEO-K-Means~\cite{Whang15SDM} is one of the representative methods in this category. In fact, if we set the overlapping parameter to be zero in NEO-K-Means, it just degrades into K-means{-}{-}. Our COR is different from K-means{-}{-} in the feature space. The partition space not only naturally caters to the definition of outliers and Holoentropy, but also alleviates the spherical structure assumption of \mbox{K-means} optimization.

\textit{Comparison with outlier detection.} Outlier Detection is a hot research area, where tremendous efforts have been made to thrive this area from different aspects. Few of them simultaneously conduct cluster analysis and outlier detection. Except K-means{-}{-}, Langrangian Relaxation (LP)~\cite{Ott14NIPS} formulates the clustering with outliers as an integer programming problem, which requires the cluster creation costs as the input parameter. LP not only suffers from huge algorithmic complexity, but also struggles to set this parameter in practical scenarios, which leads LP to return the infeasible solutions. That is the reason that we fail to report the performance of LP in the experimental part. To our best knowledge, we are the first to solve the outlier detection in the partition space, and simultaneously achieve clustering and outlier removal. Our algorithm COR starts from the objective function in terms of outlier detection, and solves the problem via clustering tool, where demonstrates the deep connection between outlier detection domain and cluster analysis area.

\textit{Comparison with consensus clustering.} Consensus Clustering aims to fuse several basic partitions into an integrated one. In our previous work, we proposed K-means-based Consensus Clustering (KCC)~\cite{Wu13IJCAI,Wu15TKDE}, which transforms the complex consensus clustering problem into a K-means solution with flexible KCC utility functions. Similarly, the input of our COR is also a set of basic partitions, and it delivers the partition with outliers via K-means{-}{-}. The partition space derived from basic partitions enables COR not only to identify outliers, but also to fuse basic partition to achieve consensus clustering. From this view, Holoentropy can be regarded as the utility function to measure the similarity between the basic partition in $B$ or $\widetilde{B}$ and the final one. For the centroid updating, the missing values in basic partitions within KCC framework provide no utility, further do not contribute the centroids. For COR, we can automatically identify the outliers, which  do not participate into the centroid updating either.

\section{Experimental Results}
In this section, we first introduce the experimental settings and data sets , then showcase the effectiveness of our proposed method compared with K-means and K-means{-}{-}. Moreover, a variety of outlier detection methods are involved as the competitive methods. Some key factors in COR are further analyzed for practical use. Finally, an application on flight trajectory is provided to demonstrate the effectiveness of COR in the real-world scenario.

\subsection{Experimental Settings}
\textbf{Data sets.} To fully evaluate our COR algorithm, numerous data sets in different domains are employed. They include the gene expression data, image data, high-dimensional text data and other multivariate data. These data sets can be found from~\cite{Liu15SDM,Liu17DMKD} and UCI\footnote{\url{https://archive.ics.uci.edu/ml/datasets.html}}. Here we treat the class with smallest size as outliers. For \textit{ecoil}, three smallest classes in the original datasets are regarded as the outliers. Table~\ref{tab:dataset} shows the numbers of instances, features, clusters and outliers of these data sets.

\noindent\textbf{Competitive Methods.} K-means and K-means{-}{-} are used for comparisons. For our COR algorithm, 100 basic partitions are generated via K-means by different cluster numbers from 2 to $2K$, then K-means{-}{-} is employed with the distance function in Eq.~\eqref{the:second} for the partition and outliers. Note that K-means{-}{-} and COR are fed with $K$ and $o$ for fair comparisons, which are true numbers of clusters and outliers, respectively. For K-means, we set the cluster number as the true number plus one, the cluster found by K-means with the smallest size is regarded as the outlier set. Codes of K-means, K-measn{-}{-} and COR are implemented by MATLAB. Each algorithm runs 20 times, and returns the average result and standard deviation. Moreover, several classical outlier detection methods including density-based LOF\cite{Breunig00SIR}, COF\cite{Tang02PKDD}, distance-based LODF~\cite{Zhang09PKDD}, angle-based FABOD~\cite{Pham12KDD}, ensemble-based iForest~\cite{Liu08ICDM}, eigenvector-based OPCA~\cite{Lee13TKDE}, cluster-based TONMF~\cite{Kannan17SDM} are also involved as the competitive methods to evaluate the outlier detection performance\footnote{The codes of outlier detection methods can be found at \url{https://github.com/dsmi-lab-ntust/AnomalyDetectionToolbox} and \url{https://github.com/ramkikannan/outliernmf}.}. $o$ points with the largest scores by these methods are regarded as outliers. For the outlier detection methods, some default settings in the original papers are used for stable results. The number of nearest neighbors in LOF, COF, LODF and FABOD is set to 50; the sub-sampling size and the number of trees in iForest are 200 and 100; the forgetting number is set to 0.1 in OPCA; the rank and two parameters in TONMF are 10, 10 and 0.1, respectively.

\begin{table}[t!]
  \caption{Characteristics of data sets}\vspace{-0.2cm}
  \scriptsize
  \centering
  \begin{tabular}{lccccc}
  \hline
  Data set & Type & \#instance  & \#feature & \#cluster & \#outlier\\
  \hline
  \textit{ecoli} & Gene & 336 & 7 & 5 & 9\\
  \textit{yeast} & Gene & 1484 & 8 & 4 & 185\\
  \textit{caltech}& Image & 1415 & 4096 & 4 & 67\\
  \textit{sun09} & Image & 3282 & 4096 & 3 & 50 \\
  \textit{fbis} & Text & 2463 & 2000 & 10 & 332\\
  \textit{k1b} & Text & 2340 & 21839 & 5 & 60\\
  \textit{re0} & Text & 1504 & 2886 & 5 & 218 \\
  \textit{re1} & Text & 1657 & 3758 & 6 & 527\\
  \textit{tr11} & Text & 414 & 6129 & 4 & 87 \\
  \textit{tr23} & Text & 204 & 5832 & 3 & 32\\
  \textit{wap} & Text &1560 & 8460 & 10 & 251\\
  \textit{glass} & UCI & 214 & 9 & 3 & 39\\
  \textit{shuttle}&  UCI & 58000 & 9 & 3 & 244\\
  \textit{kddcup} & UCI & 494021 & 38 & 3 & 54499\\
  \hline
  \end{tabular} \label{tab:dataset}
\end{table}

\begin{table*}[t!]
  \caption{Performance of clustering with outlier removal via different algorithms (\%)}\vspace{-0.2cm}
  \scriptsize
  \centering
  \resizebox{1\textwidth}{!}{
  \begin{tabular}{l|ccc|ccc|ccc|ccc}
  \hline
  \multirow{2}{*}{Data set} & \multicolumn{3}{c|}{NMI} & \multicolumn{3}{c|}{Rn} & \multicolumn{3}{c|}{Jaccard} & \multicolumn{3}{c}{F-measure}\\
  \cline{2-13}
   & K-means & K-means{-}{-} & COR & K-means & K-means{-}{-} & COR & K-means & K-means{-}{-} & COR & K-means & K-means{-}{-} & COR \\
  \hline
  \textit{ecoli} & 62.27$\pm$2.78 & 61.81$\pm$2.37 & \textbf{63.16$\pm$1.76} & 56.6$\pm$11.23 & 52.62$\pm$10.33 & \textbf{61.68$\pm$8.74} & 2.34$\pm$2.02 & 45.76$\pm$12.78 & \textbf{47.37$\pm$3.75} & 4.50$\pm$3.71 & 61.58$\pm$14.6 & \textbf{64.21$\pm$3.24}\\
  \textit{yeast} & 19.91$\pm$0.77 & 15.81$\pm$1.52 & \textbf{20.41$\pm$1.08} & 14.40$\pm$0.72 & 11.85$\pm$1.93 & \textbf{18.07$\pm$2.04} & 4.92$\pm$1.33 & 14.38$\pm$6.14 & \textbf{50.47$\pm$1.45} & 9.34$\pm$2.45 & 24.69$\pm$8.92 & \textbf{67.07$\pm$1.29}\\
  \textit{caltech}& 69.13$\pm$9.92 & 65.34$\pm$11.76 & \textbf{82.69$\pm$7.04} & 47.65$\pm$15.48 & 53.26$\pm$24.94 & \textbf{71.25$\pm$18.18} & 8.27$\pm$11.41 & 30.36$\pm$15.45 & \textbf{97.19$\pm$1.39} & 13.59$\pm$17.88 & 44.37$\pm$19.84 & \textbf{98.57$\pm$0.72}\\
  \textit{sun09} & 19.67$\pm$0.21 & 10.76$\pm$1.41 & \textbf{21.50$\pm$1.17} & 18.61$\pm$0.21 & 9.11$\pm$1.69 & \textbf{20.29$\pm$1.91} & 1.84$\pm$0.09 & \textbf{3.27$\pm$0.44} & 2.27$\pm$0.22 & 3.62$\pm$0.16 & \textbf{6.34$\pm$0.81} & 4.44$\pm$0.42\\
  \textit{fbis} & 9.77$\pm$2.41 & 30.27$\pm$3.43 & \textbf{54.13$\pm$0.85} & -0.90$\pm$0.23 & 9.00$\pm$3.65 & \textbf{38.36$\pm$2.32} & 0.02$\pm$0.07 & 5.21$\pm$0.15 & \textbf{23.77$\pm$2.24} & 0.03$\pm$0.14 & 9.91$\pm$0.27 & \textbf{38.35$\pm$2.95}\\
  \textit{k1b} & 34.4$\pm$18.55 & 33.11$\pm$17.06 & \textbf{50.00$\pm$5.15} & 25.12$\pm$18.87 & 21.74$\pm$22.48 & \textbf{32.19$\pm$9.82} & 0.00$\pm$0.00 & 0.00$\pm$0.00 & \textbf{20.53$\pm$0.82} & 0.00$\pm$0.00 & 0.00$\pm$0.00 & \textbf{34.06$\pm$1.12}\\
  \textit{re0} & 17.14$\pm$3.06 & 15.28$\pm$2.78 & \textbf{32.12$\pm$2.76} & 8.86$\pm$2.80 & 8.74$\pm$4.54 & \textbf{21.76$\pm$3.83} & 3.18$\pm$2.38 & 8.82$\pm$0.68 & \textbf{28.50$\pm$1.20} & 6.07$\pm$4.45 & 16.20$\pm$1.15 & \textbf{44.34$\pm$1.44} \\
  \textit{re1} & 16.19$\pm$3.47 & 11.57$\pm$3.92 & \textbf{35.46$\pm$2.69} & 2.79$\pm$1.36 & 3.59$\pm$1.81 & \textbf{20.79$\pm$2.51} & 0.29$\pm$0.25 & 16.75$\pm$0.34 & \textbf{27.64$\pm$1.88} & 0.59$\pm$0.50 & 28.70$\pm$0.51 & \textbf{43.28$\pm$2.30}\\
  \textit{tr11} & 9.61$\pm$0.68 & 14.89$\pm$6.95 & \textbf{58.69$\pm$3.94} & 0.40$\pm$0.12 & 3.92$\pm$4.71 & \textbf{50.95$\pm$8.55} & 0.00$\pm$0.00 & 9.93$\pm$0.42 & \textbf{34.06$\pm$3.03} & 0.00$\pm$0.00 & 18.06$\pm$0.70 & \textbf{50.74$\pm$3.44} \\
  \textit{tr23} & 7.08$\pm$0.81 & 9.82$\pm$2.86 & \textbf{19.43$\pm$6.60} & -3.67$\pm$0.47 & 1.31$\pm$3.02 & \textbf{14.01$\pm$8.04} & 0.00$\pm$0.00 & 5.87$\pm$1.02 & \textbf{12.35$\pm$2.66} & 0.00$\pm$0.00 & 11.08$\pm$1.84 & \textbf{21.88$\pm$4.31}\\
  \textit{wap} & 40.51$\pm$2.85 & 22.75$\pm$10.42 & \textbf{48.31$\pm$2.47} & 12.03$\pm$2.31 & 6.33$\pm$6.33 & \textbf{29.24$\pm$7.40} & 0.60$\pm$0.51 & 10.98$\pm$0.31 & \textbf{22.01$\pm$1.30} & 1.18$\pm$0.99 & 19.78$\pm$0.50 & \textbf{36.06$\pm$1.74}\\
  \textit{glass} & 31.35$\pm$5.90 & 33.48$\pm$3.78 & \textbf{35.88$\pm$3.94} & 20.03$\pm$3.50 & 23.47$\pm$2.09 & \textbf{24.86$\pm$1.72} & 8.23$\pm$5.41 & 24.00$\pm$8.28 & \textbf{32.67$\pm$2.87} & 14.8$\pm$8.84 & 37.97$\pm$11.59 & \textbf{49.18$\pm$3.24}\\
  \textit{shuttle}& 11.06$\pm$12.49 & 22.95$\pm$3.21 & \textbf{30.74$\pm$5.41} & 18.18$\pm$22.67 & 27.73$\pm$5.71 & \textbf{47.40$\pm$12.89} & 0.00$\pm$0.00 & 5.39$\pm$0.00 & \textbf{5.58$\pm$0.93} & 0.00$\pm$0.00 & 10.22$\pm$0.00 & \textbf{10.56$\pm$1.73}\\
  \textit{kddcup} & 1.41$\pm$0.05 & 67.04$\pm$5.18 & \textbf{86.12$\pm$0.60} & 0.04$\pm$0.00 & 71.40$\pm$9.81 & \textbf{93.91$\pm$0.85} & 0.01$\pm$0.00 & 15.06$\pm$3.26 & \textbf{15.98$\pm$0.63} & 0.02$\pm$0.00 & 26.03$\pm$5.55 & \textbf{27.55$\pm$0.96}\\
  \hline
  Average & 24.96 & 29.63 & \textbf{45.62} & 15.72 & 21.72 & \textbf{38.91} & 2.12 & 12.98 & \textbf{30.03} & 3.84 & 22.49  & \textbf{42.16}\\
  Score & 58.48 & 62.54 & \textbf{100} & 42.47 & 49.05 & \textbf{100} &8.56 & 54.34 & \textbf{97.82} & 10.03 & 58.28 & \textbf{97.87}\\
  \hline
  \end{tabular}} \label{tab:performance}
\end{table*}

\begin{figure*}[t!]
  \centering
    \subfigure[\scriptsize \textit{caltech}]{
    \includegraphics[width=0.23\textwidth]{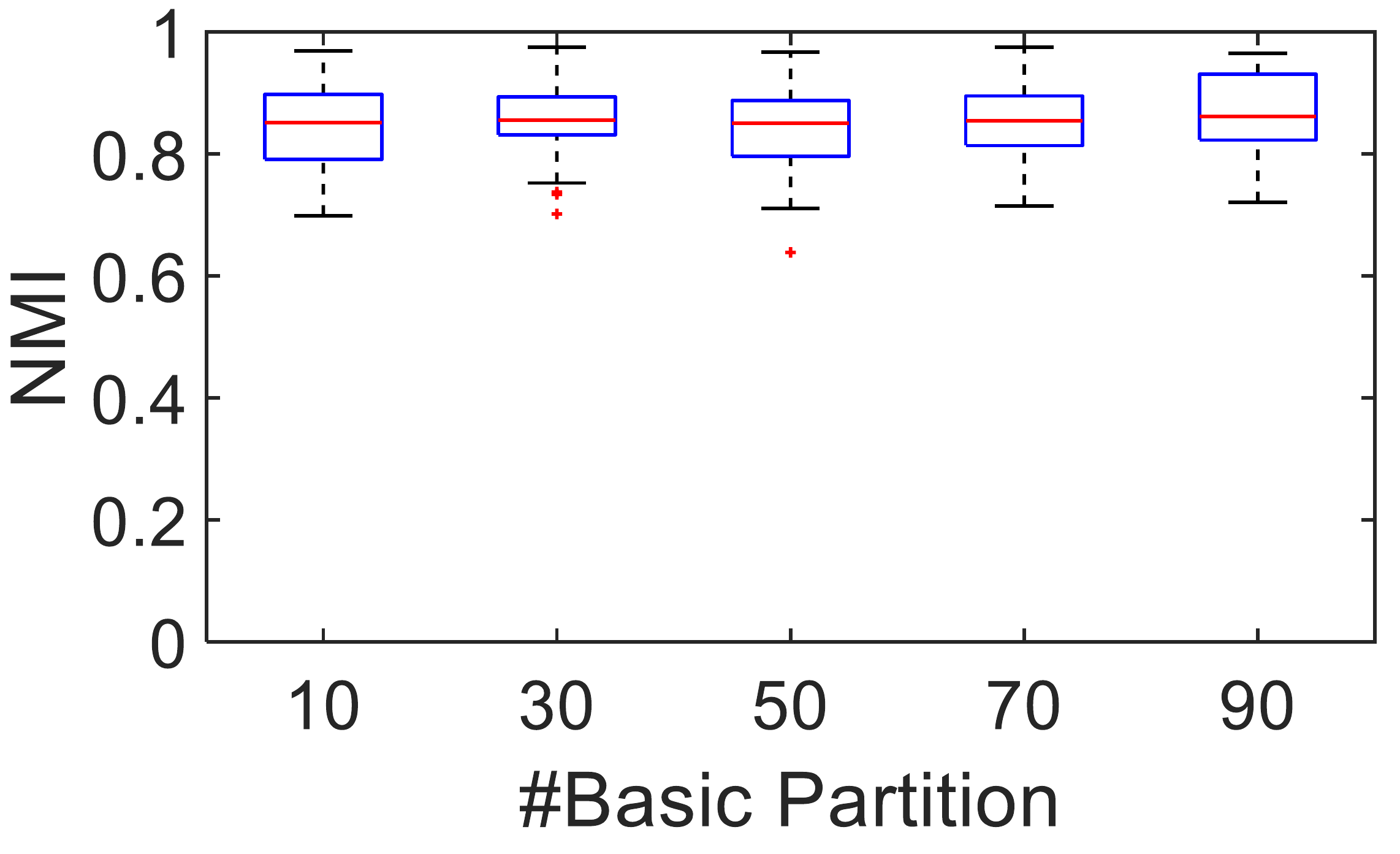}
  }
   \subfigure[\scriptsize \textit{caltech}]{
    \includegraphics[width=0.23\textwidth]{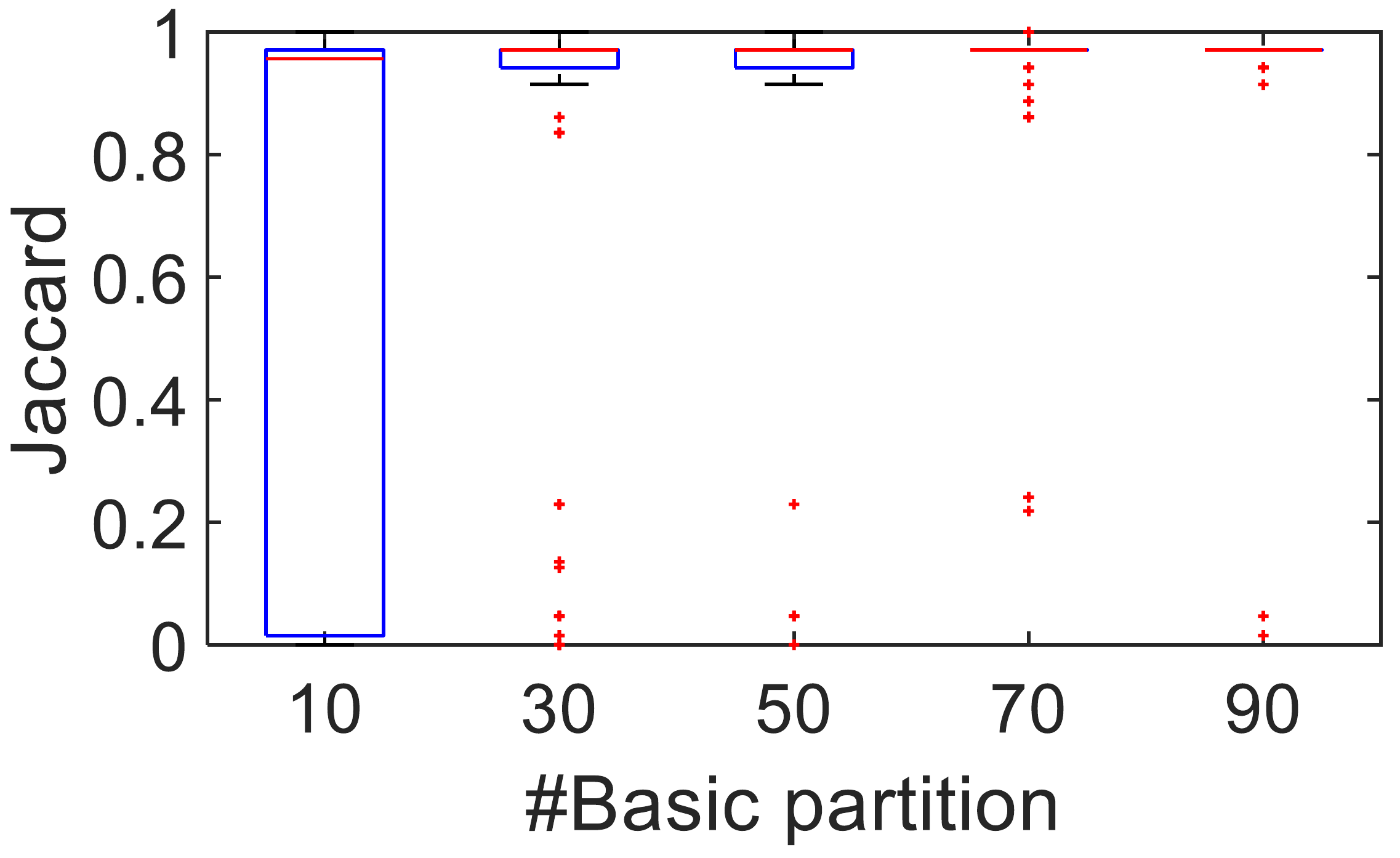}
  }
  \subfigure[\scriptsize \textit{fbis}]{
    \includegraphics[width=0.23\textwidth]{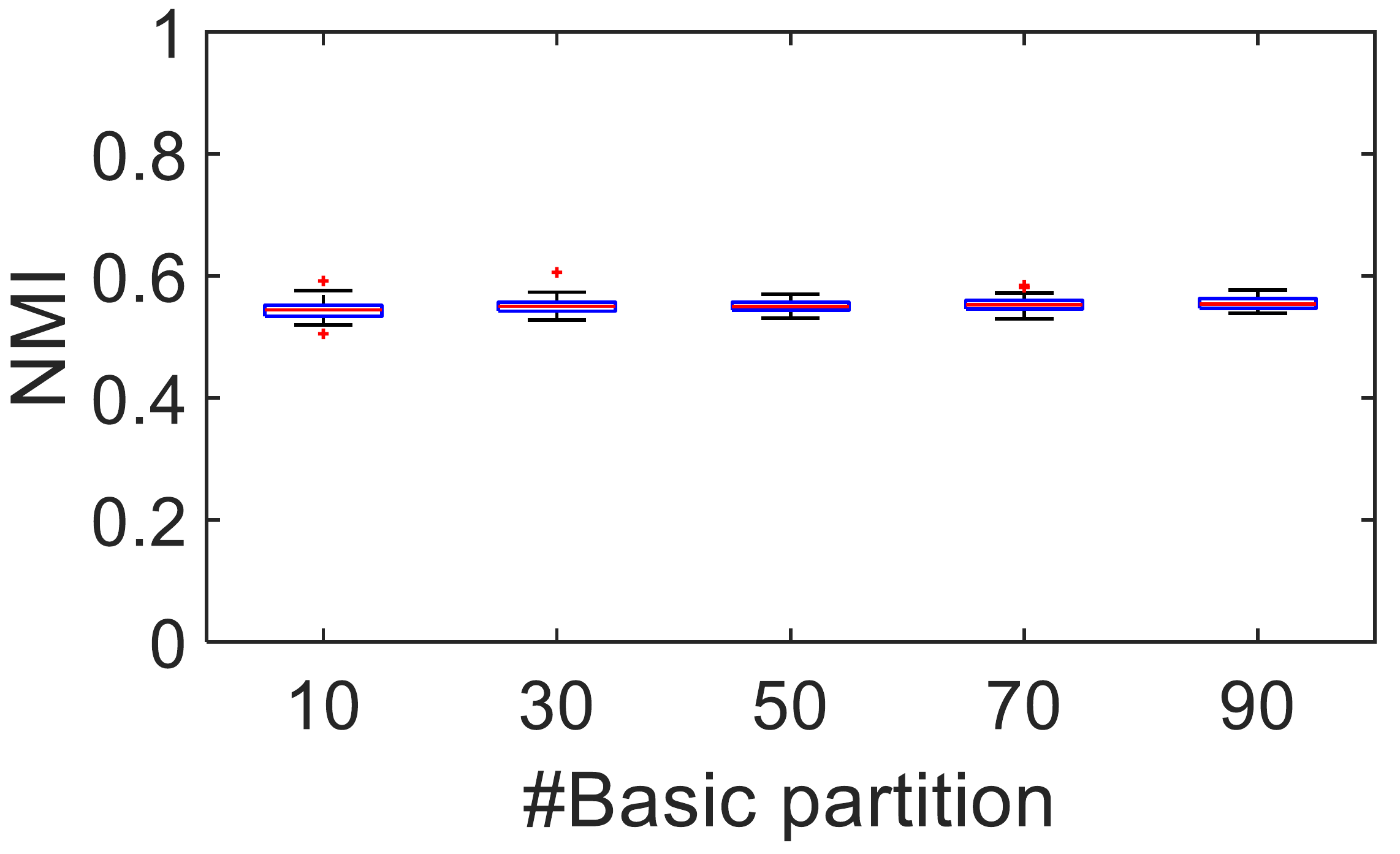}
  }
   \subfigure[\scriptsize \textit{fbis}]{
    \includegraphics[width=0.23\textwidth]{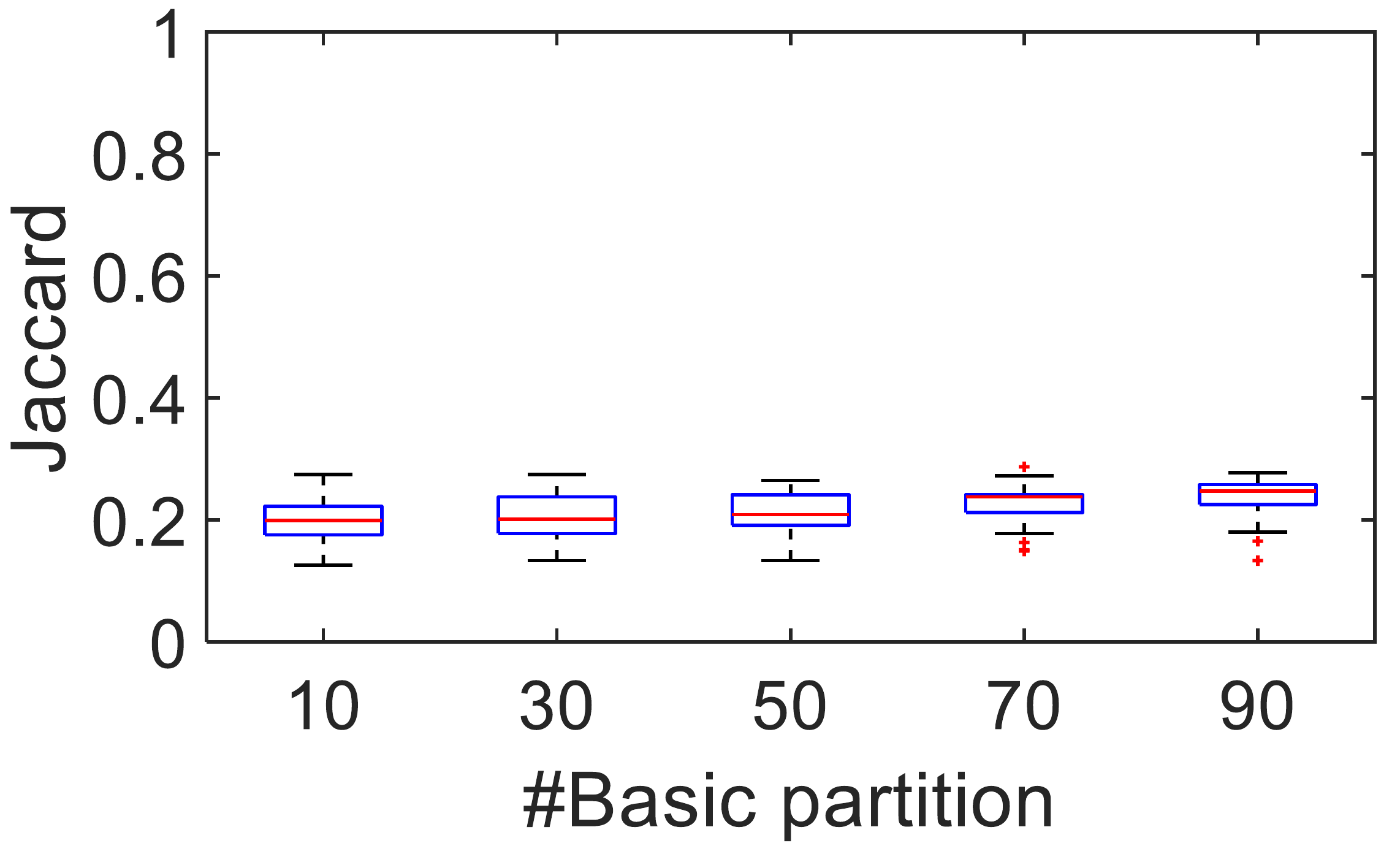}
  }
  \caption{Performance of COR with different numbers of basic partitions on \textit{caltech} and \textit{fbis}.}
  \label{fig:bp}
\end{figure*}

\noindent\textbf{Validation metric.} Although the clustering with outlier removal is an unsupervised task, we can still apply the ground truth to evaluate the performance with label information. Since we focus on the jointly clustering and outlier detection, four metrics are employed to evaluate the performance in terms of cluster validity and outlier detection. The outlier set is regarded as a special cluster in the ground truth.

Normalized Mutual Information ($NMI$) and Normalized Rand Index ($R_n$) are two widely used external measurements for cluster validity~\cite{Wu09KDD}. $NMI$ measures the mutual information between resulted cluster labels and ground truth labels, followed by a normalization operation, while $R_n$ measures the similarity between two partitions in a statistical way. They can be computed as follows.
\begin{equation}\nonumber
NMI = \frac{\sum_{i,j} n_{ij}\log \frac{n\cdot n_{ij}}{n_{i+}\cdot n_{+j}}}{\sqrt{(\sum_{i} n_{i+}\log \frac{n_{i+}}{n})(\sum_{j} n_{j+}\log \frac{n_{+j}}{n})}},
\end{equation}
\begin{equation}\nonumber
R_n = \frac{\sum_{i,j}\binom{n_{ij}}{2} -\sum_{i}\binom{n_{i+}}{2}\cdot \sum_{j} \binom{n_{+j}}{2}/\binom{n}{2}}{\sum_{i}\binom{n_{i+}}{2}/2 + \sum_{j} \binom{n_{+j}}{2}/2 -\sum_{i}\binom{n_{i+}}{2}\cdot \sum_{j} \binom{n_{+j}}{2}/\binom{n}{2}},
\end{equation}
where $n_{ij}$, $n_{i+}$, $n_{+j}$ are the co-occurrence number and cluster size of $i$-th and $j$-th cluster in the obtained partition and ground truth, respectively.

Jaccard index and F-measure are designed for the binary classification, which are employed to evaluate the outlier detection. They can be computed as follows.
\begin{equation}\nonumber
Jaccard = \frac{|O\cap O^*|}{|O\cup O^*|},
\end{equation}
\begin{equation}\nonumber
F{-}measure = 2*\frac{\textrm{precition}\cdot \textrm{recall}}{\textrm{precition}+ \textrm{recall}},
\end{equation}
where $O$ and $O^*$ are the outlier sets by the algorithm and ground truth, respectively, and F-measure is the harmonic average of the precision and recall for outlier class.

To evaluate the overall performance on all used data sets, we propose a score as follows.
\begin{equation}\nonumber
 sorce(A_i)=\sum_{j}\frac{P(A_i,D_j)}{\max_{i}P(A_i,D_j)},
\end{equation}
where $P(A_i,D_j)$ denotes the performance of algorithm $A_i$ on data set $D_j$ in terms of some metric.

Note that these four metrics and the score are positive measurements, i.e, a larger value means better performance. Although $R_n$ is normalized, it can still be negative, which means that the partition is even worse than random label assignment.

\textbf{Environment.} All experiments were run on a PC with an Intel Core i7-5930K@3.50 GHz and a 64 GB DDR3 RAM.

\begin{table*}[t!]
  \caption{Performance of outlier detection via different algorithms (\%)}\vspace{-0.2cm}
  \scriptsize
  \centering
  \resizebox{1\textwidth}{!}{
  \begin{tabular}{l|ccccccccc|ccccccccc}
  \hline
  \multirow{2}{*}{Data set} & \multicolumn{9}{c|}{Jaccard} & \multicolumn{9}{c}{F-measure}\\
  \cline{2-19}
   & LOF & COF & LDOF & FABOD & iForest & OPCA & TONMF & K-means{-}{-} &COR & LOF & COF & LDOF & FABOD & iForest & OPCA& TONMF& K-means{-}{-}& COR\\
  \hline
  \textit{ecoli}  & 20.00 & 38.46 & 5.88 & 20.00 & 38.28 & 5.88 & 0.00 & 45.76 & \textbf{47.37} & 33.33 & 55.56 & 11.11 & 33.33 & 55.56 & 11.11 & 0.00& 61.58& \textbf{64.21}\\
  \textit{yeast}  & 11.45 & 11.45 & 5.11 & 13.85 & 23.75 & 26.71 & 8.66 & 14.38 & \textbf{50.47} & 20.54 & 20.54 & 9.73 & 24.32 & 38.38 & 42.16 & 8.11&24.69 &\textbf{67.07}\\
  \textit{caltech}& 2.29  & 0.75  & 1.52 & 8.06 & 27.62 & 0.00 & 0.00 & 30.36 & \textbf{97.19}& 4.48 & 1.49 & 2.99 & 14.93 & 43.28 & 0.00 & 1.49&44.37&\textbf{98.57}\\
  \textit{sun09}  & 1.01  & 2.04  & 0.00 & 2.04 & 2.04 & 0.00 & 0.00 & \textbf{3.27} & 2.27 & 2.00 & 4.00 & 0.00 & 4.00 & 4.00 & 0.00 & 6.00&\textbf{6.34}&4.44\\
  \textit{fbis}   & 8.32  & 5.56  & 4.90 & 6.41 & 5.40 & 4.40 & 8.32 & 5.21 & \textbf{23.77} & 15.36 & 10.54 & 9.34 & 12.05 & 10.24 & 8.43 & 15.36&9.91& \textbf{38.35}\\
  \textit{k1b}    & 0.00  & 0.00  & 0.00 & 0.84 & 0.00 & 0.00 & 1.69 & 0.00 &\textbf{20.53} & 0.00 & 0.00 & 0.00 & 1.67 & 0.00 & 0.00 & 3.33&0.00&\textbf{34.06}\\
  \textit{re0}    & 2.59  & 5.31  & 3.07 & 6.34 & 2.83 & 11.79 & 7.13 & 8.82 &\textbf{28.50} & 5.05 & 10.09 & 5.96 & 11 93 & 5.50 & 21.10 & 13.30&16.20&\textbf{44.34}\\
  \textit{re1}    & 21.85 & 15.44 & 15.19 & 18.83 & 16.85 & 17.77 & 16.98 & 16.75 & \textbf{27.64} & 35.86 & 26.76 & 26.38 & 31.69 & 28.84 & 30.17 &29.03&28.70& \textbf{43.28}\\
  \textit{tr11}   & 10.13 & 8.75  & 19.18 & 10.83 & 8.75 & 8.75 &  12.99 & 9.93 &\textbf{34.06} & 18.39 & 16.09 & 32.18 & 19.54 & 16.09 & 16.09 & 22.99&18.06&\textbf{50.74}\\
  \textit{tr23}   & 4.92  & 4.92  & 6.67 & 10.34 & 6.67 & 1.59 & 10.34 & 5.87 & \textbf{12.35} & 9.37 & 9.37 & 12.50 & 18.75 & 12.50 & 3.12 &18.75& 11.08&\textbf{21.88}\\
  \textit{wap}    & 10.82 & 12.30 & 6.36 & 12.81 & 11.31 & 6.58 & 7.49 & 10.98 & \textbf{22.01} & 19.52 & 21.91 & 11.95 & 22.71 & 23.75 & 12.35 &13.94& 19.78&\textbf{36.06}\\
  \textit{glass}  & 16.42 & 36.84 & 4.00 & 25.81 & 13.04 & 14.71 & 0.00 & 24.00 &\textbf{32.67} & 28.21 & 53.85 & 76.90 & 41.03 & 23.08 & 25.64 & 0.00& 37.97&\textbf{49.18}\\
  \textit{shuttle}& 12.44 & \textbf{12.96} & 0.21 & 7.25 & 1.46  &3.61& 0.00 & 5.39&5.58 & 22.13 & \textbf{22.95} & 0.41 & 13.52 & 2.87 & 6.97 & 0.00& 10.22&10.56\\
  \textit{kddcup} & N/A & N/A & N/A & N/A & \textbf{21.22} & 15.66 & 8.66 & 15.06& 15.98 & N/A & N/A & N/A & N/A & \textbf{35.01} & 27.08 &15.94& 26.03&27.55\\
  \hline
  Average & 9.40 & 11.91 & 5.55 & 11.03 & 12.80 & 8.39 & 5.83 & 12.98 & \textbf{30.03} & 16.48 & 19.47 & 15.34 & 19.19 & 21.36 & 14.59 &10.59& 22.49&\textbf{42.16}\\
  Score & 34.33 & 41.88 & 18.73 & 40.10 & 42.14 & 28.54 & 30.72 & 48.15 & \textbf{91.17} & 37.09 & 42.80 & 27.79 & 43.02 & 46.70 & 31.48 & 33.60 & 51.13 & \textbf{89.91} \\
  \hline
  \multicolumn{19}{l}{Note: We omit the standard deviations due to the determinacy of most outlier detection methods. N/A means failure to deliver results due to out-of-memory on a PC machine with 64G RAM. }\\
  \end{tabular}} \label{tab:detection}
\end{table*}

\subsection{Algorithmic Performance}
Here we evaluate the performance of COR by comparing with K-means{-}{-} and outlier detection methods. Table~\ref{tab:performance} shows the performance of clustering with outlier removal via K-means, K-means{-}{-} and COR. There are three obvious observations. (1) few outliers can easily destroy the whole cluster structure. This point can be verified from the fact that K-means delivers poor clustering results on \textit{fbis}, \textit{tr23} and \textit{kddcup} in terms of NMI and Rn. Moreover, K-means fails to capture the outliers by simply increasing the cluster number. (2) K-means{-}{-} jointly learns the cluster structure and detects the outliers, which alleviates the negative impact of outliers on the clusters and achieves better performance over K-means on the average level. Although K-means{-}{-} slightly outperforms COR on \textit{sun09} in terms of outlier detection, the cluster structure provided by K-means{-}{-} is much worse than COR, even K-means clustering. (3) COR exceeds K-means and K-means{-}{-} by a large margin in both cluster analysis and outlier detection. For example, COR gains more than 10\%, 20\% and 40\% improvements by cluster validity over rivals on \textit{caltech}, \textit{fbis} and \textit{tr11}, respectively. Moreover, COR also provides better outlier detection results. On \textit{yeast} and \textit{caltech}, there exists more than 30\%, 50\% gains over K-means{-}{-}; especially, on \textit{k1b}, COR achieves 25.53 and 34.06 in terms of Jaccard and F-measure; however, K-means{-}{-} fails to detect any outliers. Recall that COR is in essence K-means{-}{-} on the binary matrix $[B\ \widetilde{B}]$. The huge improvements result from the partition space, where defines the concept of clusters and achieves the joint consensus clustering and outlier removal. From the score, COR significantly outperforms K-means and \mbox{K-means{-}{-}} in terms of all four metrics. Since COR is conducted in the partition space, we also compare with K-means-based Consensus Clustering (KCC)~\cite{Wu15TKDE} with the same basic partitions by adding one more cluster to capture the outliers. Due to the limited page, we report that on the average level, KCC delivers the competitive cluster results, where COR slightly outperforms KCC by 1.21\% and 3.95\% in terms of NMI and Rn. Unfortunately, KCC fails to detect any outliers on all the datasets.

\begin{table}[t!]
  \caption{Execution time by second}\vspace{-0.2cm}
  \scriptsize
  \centering
  \begin{tabular}{l|rrrrr}
  \hline
  Method & \textit{sun09} & \textit{k1b} & \textit{wap} & \textit{shuttle} & \textit{kddcup}\\
  \hline
  K-means & 1.12 & 4.55 & 1.25 & 0.22 & 0.62\\
  LOF & 65.16 & 150.38 & 26.81 & 11.93 & N/A \\
  COF & 79.50 & 154.02 & 30.18 & 181.45 & N/A\\
  LDOF & 277.25 & 2638.97 & 903.43 & 246.87 & N/A \\
  FABOD & 567.47 & 5373.76 & 1811.28 & 495.43 & N/A\\
  iForest & 12.55 & 12.88 & 8.53 & 165.42 & 1455.41\\
  OPCA & 0.40 & 6.18 & 1.75 & 0.30 & 2.51 \\
  TONMF&7.87 & 31.76 & 7.67 & 1.18 & 18.17\\
  K-means{-}{-} & 3.56 & 65.28 & 12.73 & 0.33 & 5.98 \\
  \hline
  BP & 52.86 & 121.95 & 36.58 & 5.09 & 5.55\\
  COR & 2.31 & 0.15 & 0.19 & 0.57 & 2.89\\
  \hline
  \multicolumn{6}{l}{Note: BP shows the time for generating 100 basic partitions. }\\
  \end{tabular} \label{tab:time}
\end{table}

Beyond K-means and K-means{-}{-}, we also compare COR with several outlier detection methods. Table~\ref{tab:detection} shows the performance of outlier detection in terms of Jaccard and F-measure. These algorithms are based on different assumptions including density, distance, angle, ensemble, eigenvector and clusters, and sometimes effective on certain data set. For example, COF and iForest get the best performance on \textit{shuttle} and \textit{kddcup}, respectively. However, in the most cases, these competitors show the obvious disadvantages in terms of performance. The reasons are complicated, but the original space and unsupervised parameter setting might be two of them. For TONMF, there are three parameters as the inputs, which are difficult to set without any knowledge from domain experts. Differently, COR requires two straightforward parameters, and benefits from the partition space and joint clustering with outlier removal, which brings the extra gains on several data sets. On \textit{shuttle} and \textit{kddcup}, COR does not deliver the results as good as the outlier detection methods. In the next subsection, we further improve the performance of COR via different basic partition generation strategy.

Next we continue to evaluate these algorithms in terms of efficiency. Table~\ref{tab:time} shows the execution time of these methods on five large-scale or high-dimensional data sets. Generally speaking, the density-based, distance-based and angle-based methods become struggled on high-dimensional data sets, especially FABOD is the most time consuming method, while the cluster-based methods including TONMF, K-means{-}{-} are relatively fast. It is worthy to note that the density-based, distance-based and angle-based methods require to calculate the nearest neighbor matrix, which takes huge space complexity and fails to deliver results on large-scale data sets due to out-of-memory on a PC machine with 64G RAM. For COR, the time complexity is roughly linear to the number of instances; moreover, COR is conducted on the binary matrix, rather than the original feature space. Thus, COR is also suitable for high-dimensional data. On \textit{k1b}, COR only takes 0.15 seconds, over 400 times faster than K-means{-}{-}. Admittedly, COR requires a set of basic partitions as the input, which takes the extra execution time. In Table~\ref{tab:time}, we report the execution time of generating 100 basic partitions as well. This process can be further accelerated by parallel computing. Even taking the time of generating basic partition, COR is still much faster than the density-based, distance-based and angle-based outlier detection methods.

\begin{figure}[t!]
  \centering
    \subfigure[\scriptsize \textit{shuttle}]{
    \includegraphics[width=0.22\textwidth]{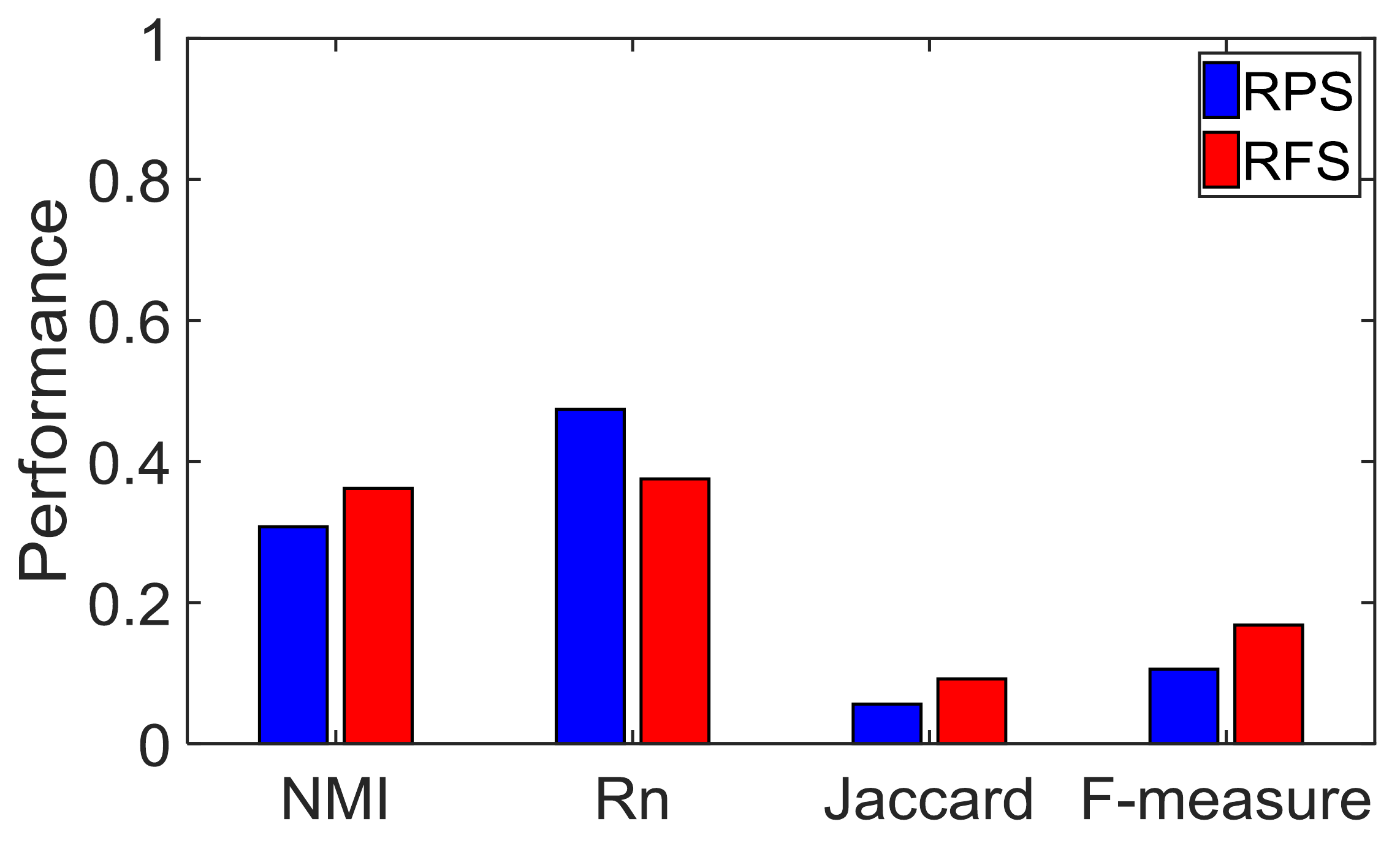}
  }\hspace{-2mm}
   \subfigure[\scriptsize \textit{kddcup}]{
    \includegraphics[width=0.22\textwidth]{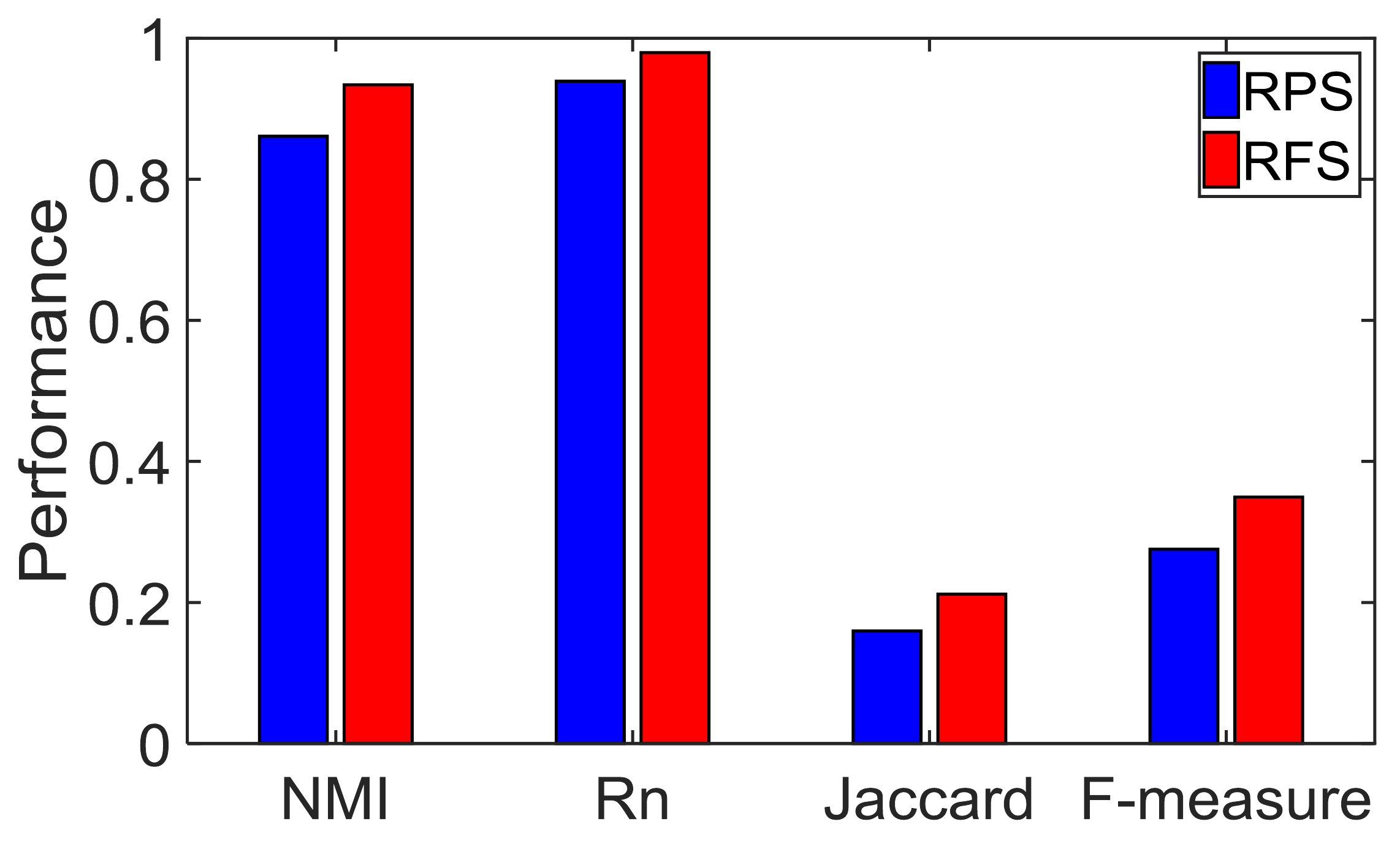}
  }
  \caption{Performance of COR with different basic partition generation strategies.}
  \label{fig:rfs}
\end{figure}

\begin{figure*}[t!]
  \centering
    \subfigure[\scriptsize \textit{Chinese flight trajectories}]{
    \includegraphics[width=0.23\textwidth]{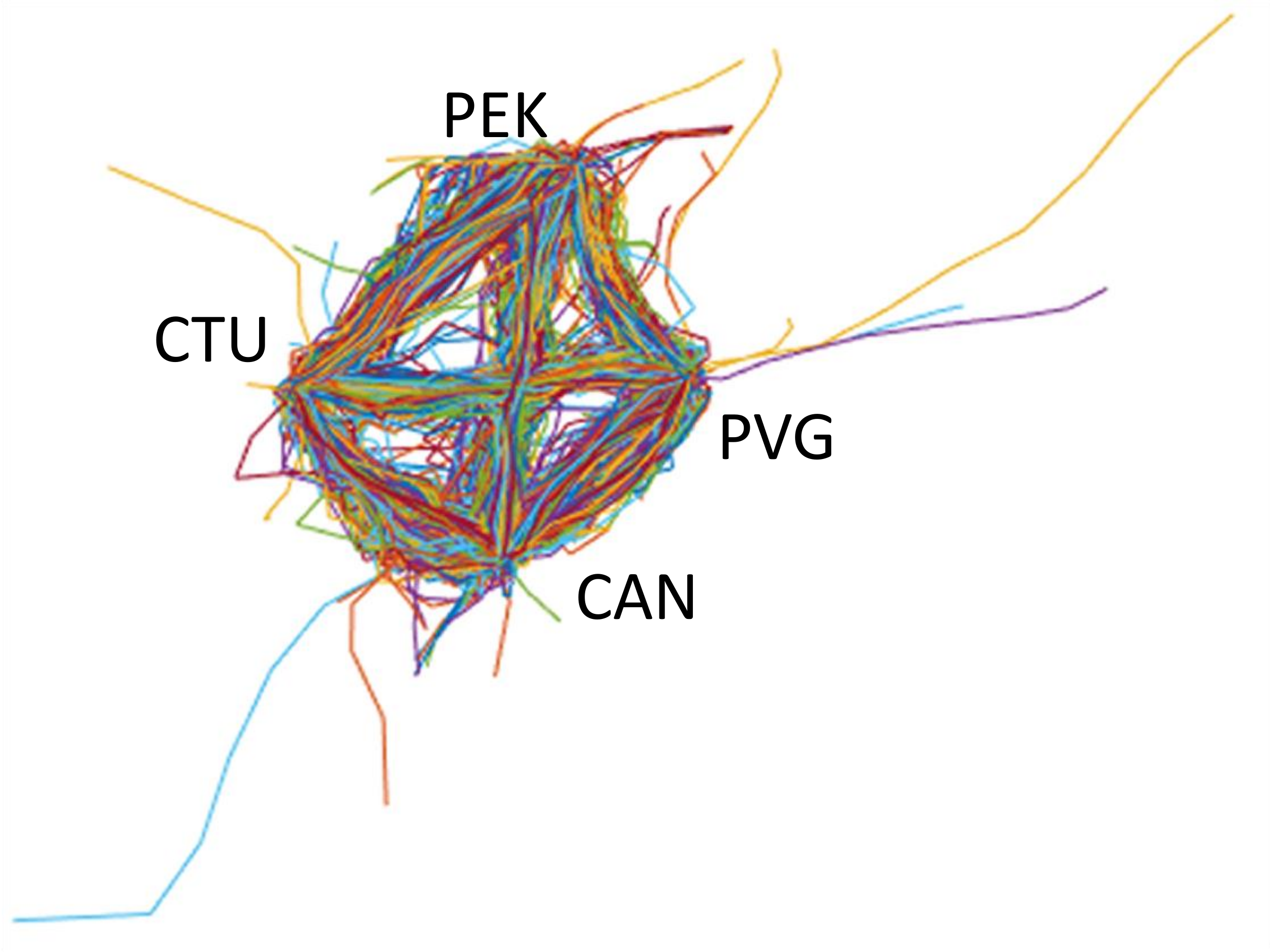}\label{fig:f-chinese}
  }
   \subfigure[\scriptsize \textit{Outlier trajectories in China}]{
    \includegraphics[width=0.23\textwidth]{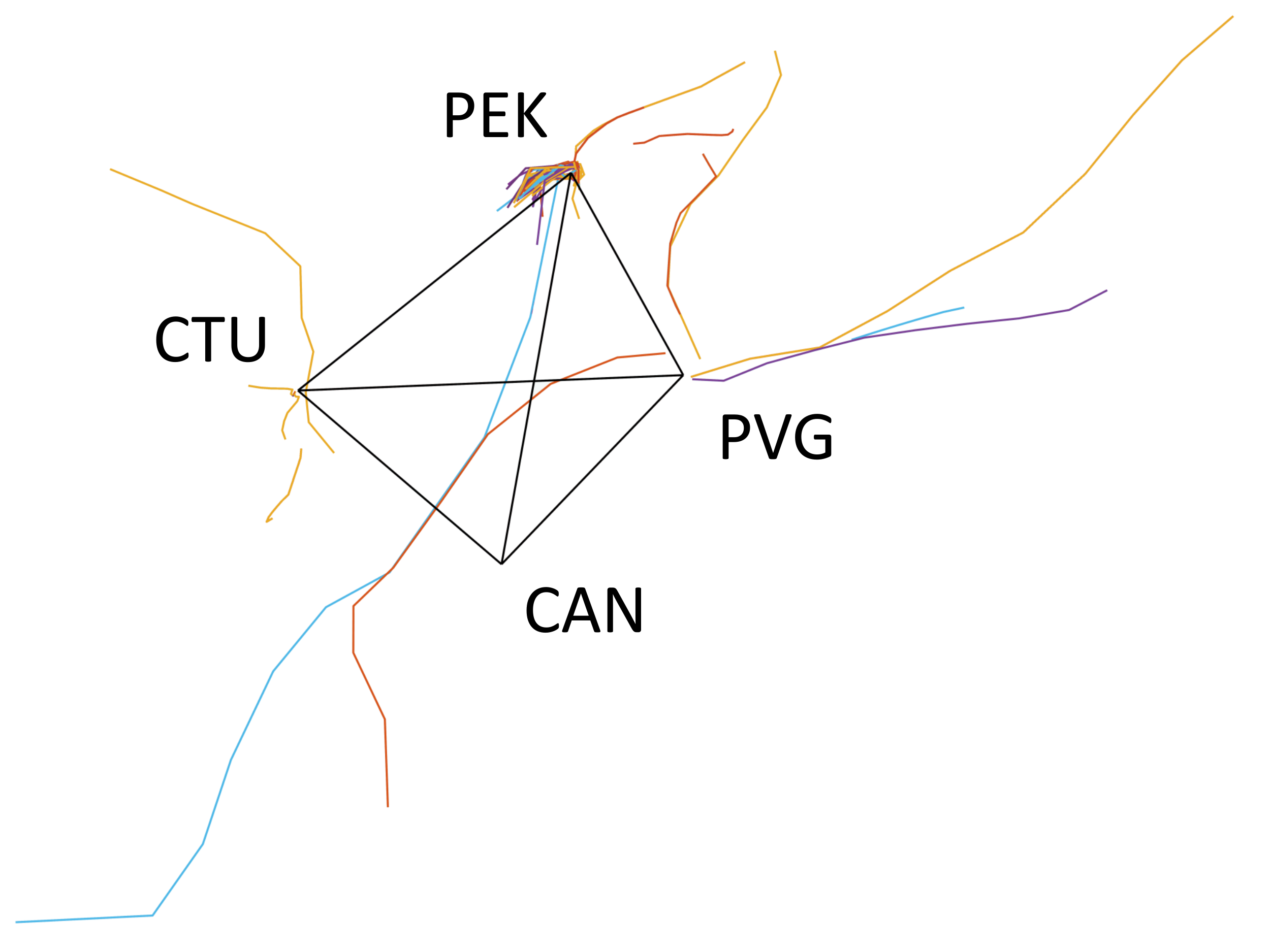}\label{fig:o-chinese}
  }
  \subfigure[\scriptsize \textit{US flight trajectories}]{
    \includegraphics[width=0.23\textwidth]{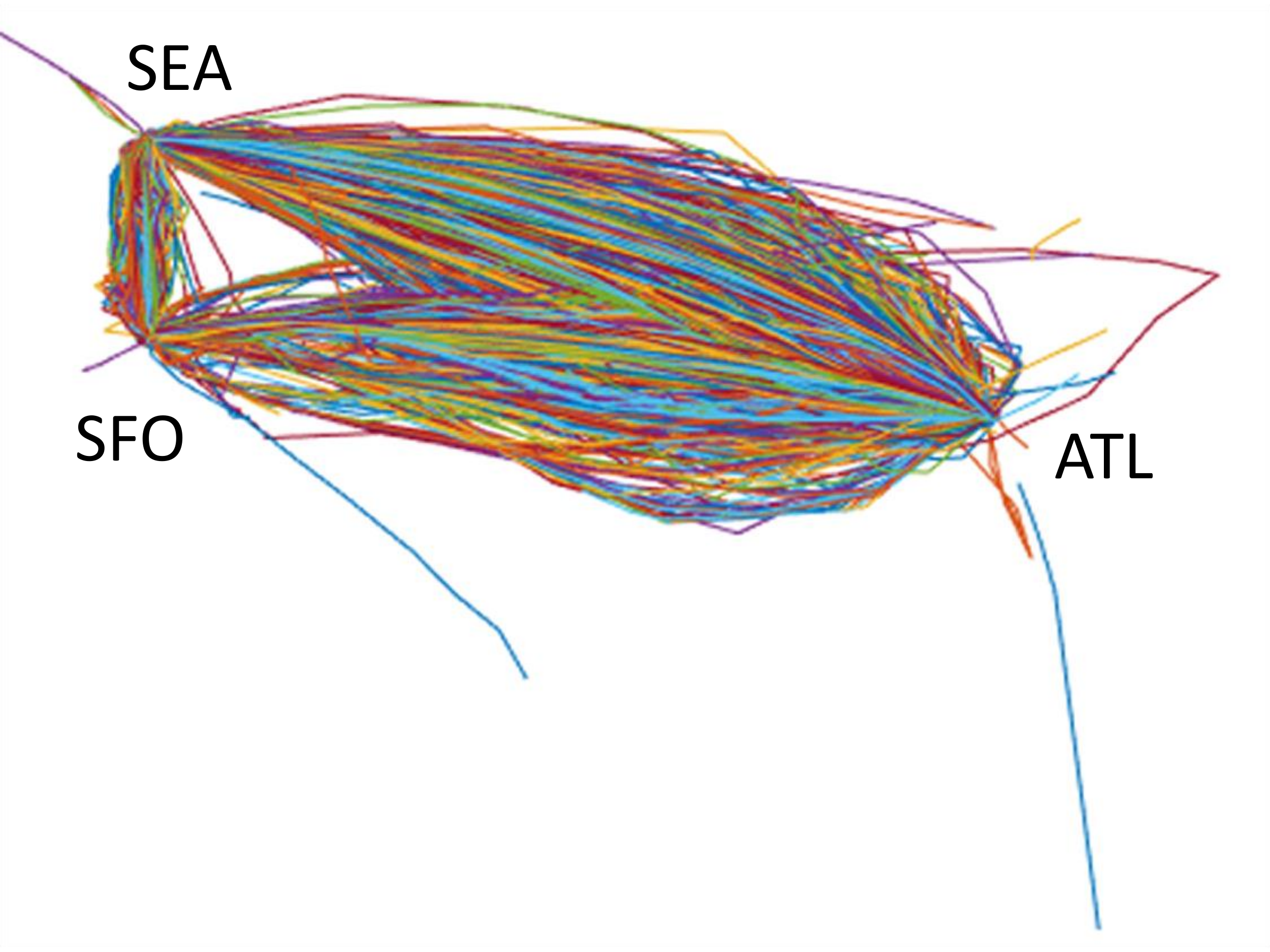}\label{fig:f-us}
  }
   \subfigure[\scriptsize \textit{Outlier trajectories in US}]{
    \includegraphics[width=0.23\textwidth]{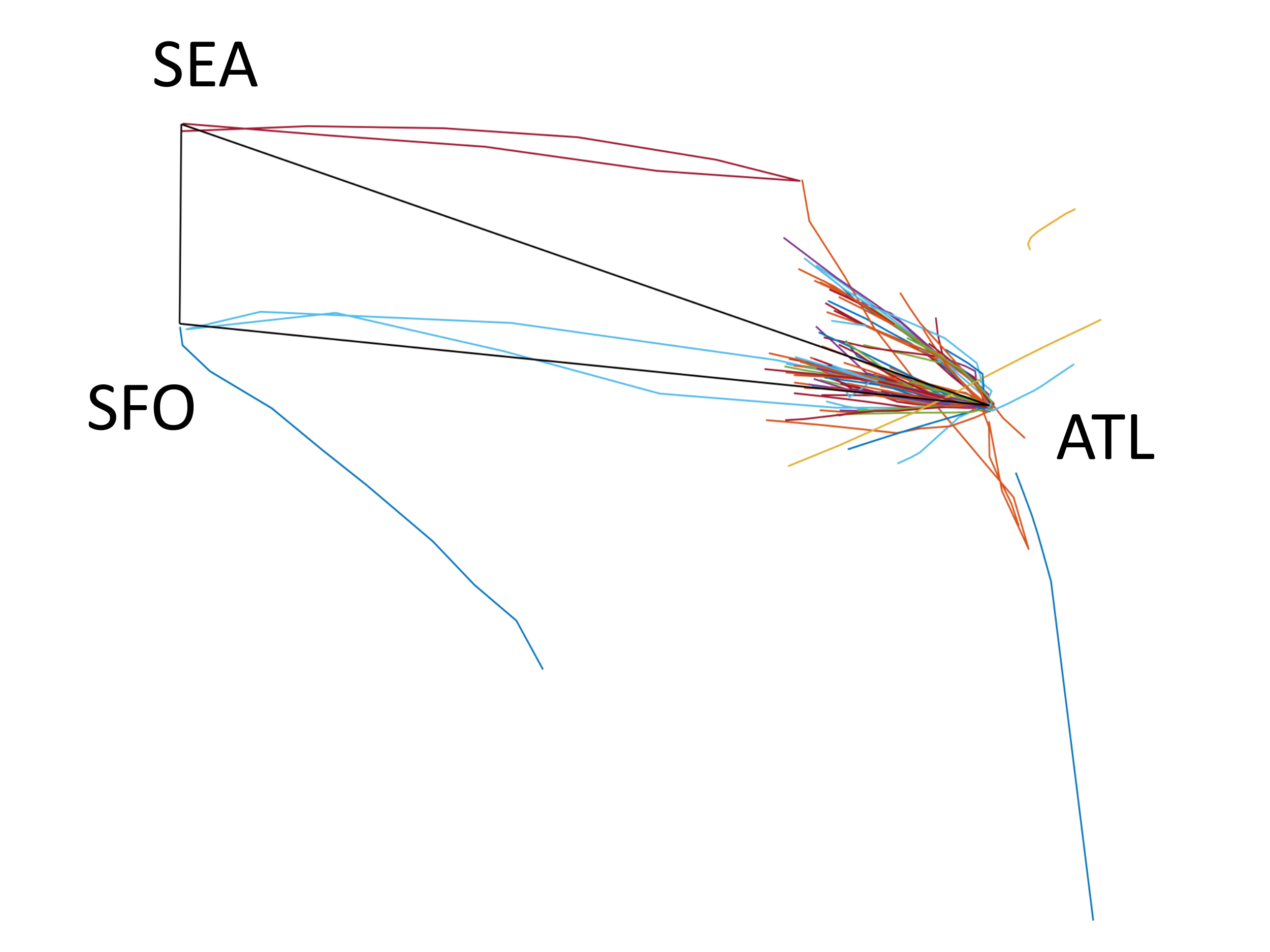}\label{fig:o-us}
  }
  \caption{Chinese and US flight trajectories. (a) \& (c) show the flight trajectories and (b) \& (d) demonstrate the outlier trajectories detected by COR. }
  \label{fig:flight}
\end{figure*}

\subsection{Factor Exploration}
In this subsection, we provide further analyses on the factors inside COR, the number of basic partitions and the basic partition generation strategy.

In consensus clustering, the performance of clustering goes up with the increase of basic partitions~\cite{Wu15TKDE,Liu16KDD,Liu17DMKD}. Similarly, we test COR with different numbers of basic partitions. Figure~\ref{fig:bp} shows the boxplot of the performance of COR with 10, 30, 50, 70 and 90 basic partitions on \textit{caltech} and \textit{fbis} in terms of NMI and Jaccard. For a certain number of basic partitions, we generate 100 sets of basic partitions and run COR for the boxplot. From Figure~\ref{fig:bp}, we have that COR delivers high quality partitions even with 10 basic partitions, and that for outlier detection, the performance slightly increases with more basic partitions and stabilizes in a small region. Generally speaking, 30 basic partitions are enough for COR to deliver a good result.

So far, we employ the Random Parameter Selection (RPS) strategy to generate basic partitions, which employs K-means clustering with different cluster numbers. In fact, Random Feature Selection (RFS) is another widely strategy to generation basic partitions, which randomly selects partial features for K-means clustering. In the following, we evaluate the performance of COR with RFS. Here we set the random feature selection ratio to be 50\% for 100 basic partitions. Figure~\ref{fig:rfs} shows the performance of COR with different basic partition generation strategies on \textit{shuttle} and \textit{kddcup}. RFS achieves some improvements over RPS on these two data sets with different metrics, except on \textit{shuttle} in terms of Rn. This indicates that RFS is helpful to alleviate the negative impact of noisy features, and further produces high quality basic partitions for COR. It is worthy to note that COR with RFS on \textit{kddcup} achieves 21.18 and 34.95 in terms of Jaccard and F-measure, which exceeds the one with RPS over 5\% and 7\%, and competes with iForest. This means that COR with RFS gets the competitive performance with the best rival on \textit{kddcup}, and it is over 170 times faster than iForest.

\subsection{Application on Trajectory Detection}
Finally, we evaluate our COR in the real-world application on outlier trajectory detection. The data come from Flight Tracker\footnote{\url{https://www.flightradar24.com}.}, including flightID, flightNum, timestamp, latitude, longitude, height, departure airport, arrival airport and other information. We employ the API to request the flight trajectory every 5 minutes, and collect one-year data from October, 2016 to September, 2017 all over the world. After the data processing, we organize the data with each row representing one flight with evolutional latitude and longitude. Since these flights have different lengths of records, we uniformly sample 10 records for each flight, where only the latitude and longitude are used as features. Therefore, each flight is processed in a 20-length vector for further analysis. Here we select the Chinese flights between Beijing (PEK), Shanghai (PVG), Chengdu (CTU) and Guangzhou (CAN), and US flights between Seattle (SEA), San Francisco (SFO) and Atlanta (ATL) for further analysis. Figure~\ref{fig:f-chinese} \&~\ref{fig:f-us} show the trajectories of Chinese and US flights. By this means, we have the Chinese and US flight trajectory data sets with 85,990 and 33,648 flights, respectively.

Then COR is applied on these two data sets for outlier trajectory detection. Here we set the cluster numbers to be 6 and 3 for these two data sets, and the outlier numbers are both 200. Figure~\ref{fig:o-chinese} \&~\ref{fig:o-us} show the outlier trajectories in these two data sets. There are two kinds of outliers. The first category includes the outliers with extra ranges. Although we focus on 7 airports in China and US, there are some trajectories out of the scope of these airport locations in terms of latitude and longitude. The transmission error and loss lead to that the trajectories of different flights are mixed together. In such cases, the system stores a non-existence trajectory. The second category has the partial trajectories. The flight location is not captured due to the failure of the sensors. These two kinds of outliers detected by COR are advantageous to further analyze the problems in trajectory system, which demonstrates the effectiveness of COR in the real-world application.

\section{Conclusion}
In this paper, we considered the joint clustering and outlier detection problem and proposed the algorithm COR. Different from the existing K-means{-}{-}, we first transformed the original feature space into the partition space according to the relationship between outliers and clusters. Then we provided the objective function based on the Holoentropy, which was partially solved by K-means optimization. Nontrivally, an auxiliary binary matrix was designed so that COR completely solved the challenging problem via K-means{-}{-} on the concatenated binary matrices. Extensive experimental results demonstrated the effectiveness and efficiency of COR significantly over the rivals including K-means{-}{-} and other state-of-the-art outlier detection methods in terms of cluster validity and outlier detection.




\section*{Acknowledgment}
This research is supported in part by the NSF IIS Award 1651902 and U.S. Army Research Office Award W911NF-17-1-0367.

\bibliographystyle{IEEEtran}
\bibliography{egbib}

\begin{thebibliography}{10}
\providecommand{\url}[1]{#1}
\csname url@samestyle\endcsname
\providecommand{\newblock}{\relax}
\providecommand{\bibinfo}[2]{#2}
\providecommand{\BIBentrySTDinterwordspacing}{\spaceskip=0pt\relax}
\providecommand{\BIBentryALTinterwordstretchfactor}{4}
\providecommand{\BIBentryALTinterwordspacing}{\spaceskip=\fontdimen2\font plus
\BIBentryALTinterwordstretchfactor\fontdimen3\font minus
  \fontdimen4\font\relax}
\providecommand{\BIBforeignlanguage}[2]{{%
\expandafter\ifx\csname l@#1\endcsname\relax
\typeout{** WARNING: IEEEtran.bst: No hyphenation pattern has been}%
\typeout{** loaded for the language `#1'. Using the pattern for}%
\typeout{** the default language instead.}%
\else
\language=\csname l@#1\endcsname
\fi
#2}}
\providecommand{\BIBdecl}{\relax}
\BIBdecl

\bibitem{tsai2015customer}
C.~Tsai, Y.~Hu, and Y.~Lu, ``Customer segmentation issues and strategies for an
  automobile dealership with two clustering techniques,'' \emph{Expert
  Systems}, vol.~32, no.~1, pp. 65--76, 2015.

\bibitem{campos2015survey}
R.~Campos, G.~Dias, A.~M. Jorge, and A.~Jatowt, ``Survey of temporal
  information retrieval and related applications,'' \emph{ACM Computing
  Surveys}, vol.~47, no.~2, p.~15, 2015.

\bibitem{shepitsen2008personalized}
A.~Shepitsen, J.~Gemmell, B.~Mobasher, and R.~Burke, ``Personalized
  recommendation in social tagging systems using hierarchical clustering,'' in
  \emph{Proceedings of ACM conference on Recommender systems}, 2008.

\bibitem{grandl2015multi}
R.~Grandl, G.~Ananthanarayanan, S.~Kandula, S.~Rao, and A.~Akella,
  ``Multi-resource packing for cluster schedulers,'' \emph{ACM SIGCOMM Computer
  Communication Review}, vol.~44, no.~4, pp. 455--466, 2015.

\bibitem{Davis07ICML}
J.~V. Davis, B.~Kulis, P.~Jain, S.~Sra, and I.~S. Dhillon,
  ``Information-theoretic metric learning,'' in \emph{Proceedings of
  International Conference on Machine Learning}, 2007.

\bibitem{Ding06ICML}
C.~Ding, D.~Zhou, X.~He, and H.~Zha, ``R 1-pca: rotational invariant l 1-norm
  principal component analysis for robust subspace factorization,'' in
  \emph{Proceedings of International Conference on Machine Learning}, 2006.

\bibitem{Liu10ICML}
G.~Liu, Z.~Lin, and Y.~Yu, ``Robust subspace segmentation by low-rank
  representation,'' in \emph{Proceedings of International Conference on Machine
  Learning}, 2010.

\bibitem{Breunig00SIR}
M.~M. Breunig, H.-P. Kriegel, R.~T. Ng, and J.~Sander, ``Lof: identifying
  density-based local outliers,'' in \emph{ACM sigmod record}, vol.~29, no.~2,
  2000, pp. 93--104.

\bibitem{Zhang09PKDD}
K.~Zhang, M.~Hutter, and H.~Jin, ``A new local distance-based outlier detection
  approach for scattered real-world data,'' in \emph{Proceedings of
  Pacific-Asia Conference on Knowledge Discovery and Data Mining}, 2009.

\bibitem{Kriegel08KDD}
H.~Kriegel and A.~Zimek, ``Angle-based outlier detection in high-dimensional
  data,'' in \emph{Proceedings of ACM SIGKDD International Conference on
  Knowledge Discovery and Data Mining}, 2008.

\bibitem{Pham12KDD}
N.~Pham and R.~Pagh, ``A near-linear time approximation algorithm for
  angle-based outlier detection in high-dimensional data,'' in
  \emph{Proceedings of ACM SIGKDD International Conference on Knowledge
  Discovery and Data Mining}, 2012.

\bibitem{Liu08ICDM}
F.~T. Liu, K.~M. Ting, and Z.-H. Zhou, ``Isolation forest,'' in
  \emph{Proceedings of IEEE International Conference on Data Mining}, 2008.

\bibitem{Lee13TKDE}
Y.~Lee, Y.~Yeh, and Y.~Wang, ``Anomaly detection via online oversampling
  principal component analysis,'' \emph{IEEE Transactions on Knowledge and Data
  Engineering}, vol.~25, no.~7, pp. 1460--1470, 2013.

\bibitem{Kannan17SDM}
R.~Kannan, H.~Woo, C.~C. Aggarwal, and H.~Park, ``Outlier detection for text
  data,'' in \emph{Proceedings of SIAM International Conference on Data
  Mining}, 2017.

\bibitem{Georgogiannis16NIPS}
A.~Georgogiannis, ``Robust k-means: a theoretical revisit,'' in
  \emph{Proceedings of Advances in Neural Information Processing Systems},
  2016.

\bibitem{ester1996density}
M.~Ester, H.-P. Kriegel, J.~Sander, X.~Xu \emph{et~al.}, ``A density-based
  algorithm for discovering clusters in large spatial databases with noise.''
  in \emph{Proceedings of ACM SIGKDD International Conference on Knowledge
  Discovery and Data Mining}, 1996.

\bibitem{Chawla13SDM}
S.~Chawla and A.~Gionis, ``k-means-–: A unified approach to clustering and
  outlier detection,'' in \emph{Proceedings of SIAM International Conference on
  Data Mining}, 2013.

\bibitem{Ott14NIPS}
L.~Ott, L.~Pang, F.~Ramos, and S.~Chawla, ``On integrated clustering and
  outlier detection,'' in \emph{Proceedings of Advances in Neural Information
  Processing Systems}, 2014.

\bibitem{Wu13TKDE}
S.~Wu and S.~Wang, ``Information-theoretic outlier detection for large-scale
  categorical data,'' \emph{IEEE Transactions on Knowledge and Data
  Engineering}, vol.~25, no.~3, pp. 589--602, 2013.

\bibitem{Yi12NIPS}
J.~Yi, R.~Jin, S.~Jain, T.~Yang, and A.~Jain, ``Semi-crowdsourced clustering:
  Generalizing crowd labeling by robust distance metric learning,'' in
  \emph{Proceedings of Advances in Neural Information Processing Systems},
  2012.

\bibitem{Elhamifar13TPAMI}
E.~Elhamifar and R.~Vidal, ``Sparse subspace clustering: Algorithm, theory, and
  applications,'' \emph{IEEE Transactions on Pattern Analysis and Machine
  Intelligence}, vol.~35, no.~11, pp. 2765--2781, 20013.

\bibitem{Dotto16SC}
F.~Dotto, A.~Farcomeni, L.~García-Escudero, and A.~Mayo-Iscar, ``A reweighting
  approach to robust clustering,'' \emph{Statistics and Computing}, pp. 1--17,
  2016.

\bibitem{Strehl02JMLR}
A.~Strehl and J.~Ghosh, ``Cluster ensembles --- a knowledge reuse framework for
  combining partitions,'' \emph{Journal of Machine Learning Research}, vol.~3,
  pp. 583--617, 2003.

\bibitem{Liu17TKDE}
H.~Liu, J.~Wu, T.~Liu, D.~Tao, and Y.~Fu, ``Spectral ensemble clustering via
  weighted k-means: Theoretical and practical evidence,'' \emph{IEEE
  Transactions on Knowledge and Data Engineering}, vol.~29, no.~5, pp.
  1129--1143, 2017.

\bibitem{Fred05TPAMI}
A.~Fred and A.~Jain, ``Combining multiple clusterings using evidence
  accumulation,'' \emph{IEEE Transactions on Pattern Analysis and Machine
  Intelligence}, vol.~27, no.~6, pp. 835--850, 2005.

\bibitem{Lourenco13ML}
A.~Lourenco, S.~Bul$\grave{o}$, N.~Rebagliati, A.~Fred, M.~Figueiredo, and
  M.~Pelillo, ``Probabilistic consensus clustering using evidence
  accumulation,'' \emph{Machine Learning}, vol.~98, no. 1-2, pp. 331--357,
  2013.

\bibitem{Fern04ICML}
X.~Z. Fern and C.~E. Brodley, ``Solving cluster ensemble problems by bipartite
  graph partitioning,'' in \emph{Proceedings of ICML}, 2004.

\bibitem{Ayad08TPAMI}
H.~Ayad and M.~Kamel, ``Cumulative voting consensus method for partitions with
  variable number of clusters,'' \emph{IEEE Transactions on Pattern Analysis
  and Machine Intelligence}, vol.~30, no.~1, pp. 160--173, 2008.

\bibitem{Domeniconi09TKDD}
C.~Domeniconi and M.~Al-Razgan, ``Weighted cluster ensembles: Methods and
  analysis,'' \emph{ACM Transactions on Knowledge Discovery from Data}, vol.~2,
  no.~4, p.~17, 2009.

\bibitem{Topchy03ICDM}
A.~Topchy, A.~Jain, and W.~Punch, ``Combining multiple weak clusterings,'' in
  \emph{Proceedings of International Conference on Data Mining}, 2003.

\bibitem{Li07ICDM}
T.~Li, D.~Chris, and M.~Jordan, ``Solving consensus and semi-supervised
  clustering problems using nonnegative matrix factorization,'' in
  \emph{Proceedings of International Conference on Data Mining}, 2007.

\bibitem{Topchy04SDM}
A.~Topchy, A.~Jain, and W.~Punch, ``A mixture model for clustering ensembles,''
  in \emph{Proceedings of SIAM International Conference on Data Mining}, 2004.

\bibitem{Lu08AAAI}
Z.~Lu, Y.~Peng, and J.~Xiao, ``From comparing clusterings to combining
  clusterings,'' in \emph{Proceedings of AAAI Conference on Artificial
  Intelligence}, 2008.

\bibitem{Xie14KDD}
S.~Xie, J.~Gao, W.~Fan, D.~Turaga, and P.~Yu, ``Class-distribution regularized
  consensus maximization for alleviating overfitting in model combination,'' in
  \emph{Proceedings of ACM SIGKDD International Conference on Knowledge
  Discovery and Data Mining}, 2014.

\bibitem{liu2019consensus}
H.~Liu, Z.~Tao, and Z.~Ding, ``Consensus clustering: An embedding perspective,
  extension and beyond,'' \emph{arXiv preprint arXiv:1906.00120}, 2019.

\bibitem{Tang02PKDD}
J.~Tang, Z.~Chen, A.~W.-C. Fu, and D.~W. Cheung, ``Enhancing effectiveness of
  outlier detections for low density patterns,'' in \emph{Proceedings of
  Pacific-Asia Conference on Knowledge Discovery and Data Mining}, 2002.

\bibitem{He08CSIS}
Z.~He, X.~Xu, Z.~Huang, and S.~Deng, ``Fp-outlier: Frequent pattern based
  outlier detection,'' \emph{Computer Science and Information Systems}, vol.~2,
  no.~1, pp. 103--118, 2005.

\bibitem{Liu16ICBD}
H.~Liu, Y.~Zhang, B.~Deng, and Y.~Fu, ``Outlier detection via sampling
  ensemble,'' in \emph{Proceedings of IEEE Conference on Big Data}, 2016.

\bibitem{ruff2018deep}
L.~Ruff, N.~G{\"o}rnitz, L.~Deecke, S.~A. Siddiqui, R.~Vandermeulen, A.~Binder,
  E.~M{\"u}ller, and M.~Kloft, ``Deep one-class classification,'' in
  \emph{International Conference on Machine Learning}, 2018, pp. 4390--4399.

\bibitem{schlegl2017unsupervised}
T.~Schlegl, P.~Seeb{\"o}ck, S.~M. Waldstein, U.~Schmidt-Erfurth, and G.~Langs,
  ``Unsupervised anomaly detection with generative adversarial networks to
  guide marker discovery,'' in \emph{International Conference on Information
  Processing in Medical Imaging}.\hskip 1em plus 0.5em minus 0.4em\relax
  Springer, 2017, pp. 146--157.

\bibitem{zenati2018efficient}
H.~Zenati, C.~S. Foo, B.~Lecouat, G.~Manek, and V.~R. Chandrasekhar,
  ``Efficient gan-based anomaly detection,'' \emph{arXiv preprint
  arXiv:1802.06222}, 2018.

\bibitem{li2018anomaly}
D.~Li, D.~Chen, J.~Goh, and S.-k. Ng, ``Anomaly detection with generative
  adversarial networks for multivariate time series,'' \emph{arXiv preprint
  arXiv:1809.04758}, 2018.

\bibitem{Charikar01SODA}
M.~Charikar, S.~Khuller, D.~Mount, and G.~Narasimhan, ``Algorithms for facility
  location problems with outliers,'' in \emph{Proceedings of ACM-SIAM symposium
  on Discrete algorithms}, 2001.

\bibitem{Chen08SODA}
K.~Chen, ``A constant factor approximation algorithm for k-median clustering
  with outliers,'' in \emph{Proceedings of ACM-SIAM symposium on Discrete
  algorithms}, 2008.

\bibitem{Liu15KDD}
H.~Liu, T.~Liu, J.~Wu, D.~Tao, and Y.~Fu, ``Spectral ensemble clustering,'' in
  \emph{Proceedings of ACM SIGKDD International Conference on Knowledge
  Discovery and Data Mining}, 2015.

\bibitem{Liu16KDD}
H.~Liu, M.~Shao, S.~Li, and Y.~Fu, ``Infinite ensemble for image clustering,''
  in \emph{Proceedings of ACM SIGKDD International Conference on Knowledge
  Discovery and Data Mining}, 2016.

\bibitem{Banerjee05JMLR}
A.~Banerjee, S.~Merugu, I.~Dhillon, and J.~Ghosh, ``Clustering with bregman
  divergences,'' \emph{Journal of Machine Learning Research}, vol.~6, pp.
  1705--1749, 2005.

\bibitem{Whang15SDM}
J.~Whang, I.~Dhillon, and D.~Gleich, ``Non-exhaustive, overlapping k-means,''
  in \emph{Proceedings of SIAM International Conference on Data Mining}, 2015.

\bibitem{Wu13IJCAI}
J.~Wu, H.~Liu, H.~Xiong, and J.~Cao, ``A theoretic framework of k-means-based
  consensus clustering,'' in \emph{Proceedings of International Joint
  Conference on Artificial Intelligence}, 2013.

\bibitem{Wu15TKDE}
J.~Wu, H.~Liu, H.~Xiong, J.~Cao, and J.~Chen, ``K-means-based consensus
  clustering: A unified view,'' \emph{IEEE Transactions on Knowledge and Data
  Engineering}, vol.~27, no.~1, pp. 155--169, 2015.

\bibitem{Liu15SDM}
H.~Liu, J.~Wu, D.~Tao, Y.~Zhang, and Y.~Fu, ``Dias: A disassemble-assemble
  framework for highly sparse text clustering,'' in \emph{Proceedings of SIAM
  International Conference on Data Mining}, 2015.

\bibitem{Liu17DMKD}
H.~Liu, M.~Shao, S.~Li, and Y.~Fu, ``Infinite ensemble clustering,'' \emph{Data
  Mining and Knowledge Discovery}, no. 1-32, 2017.

\bibitem{Wu09KDD}
J.~Wu, H.~Xiong, and J.~Chen, ``Adapting the right measures for k-means
  clustering,'' in \emph{Proceedings of ACM SIGKDD International Conference on
  Knowledge Discovery and Data Mining}, 2009.

\end{thebibliography}

\begin{IEEEbiography}[{\includegraphics[width=1in,height=1.25in,clip,keepaspectratio]{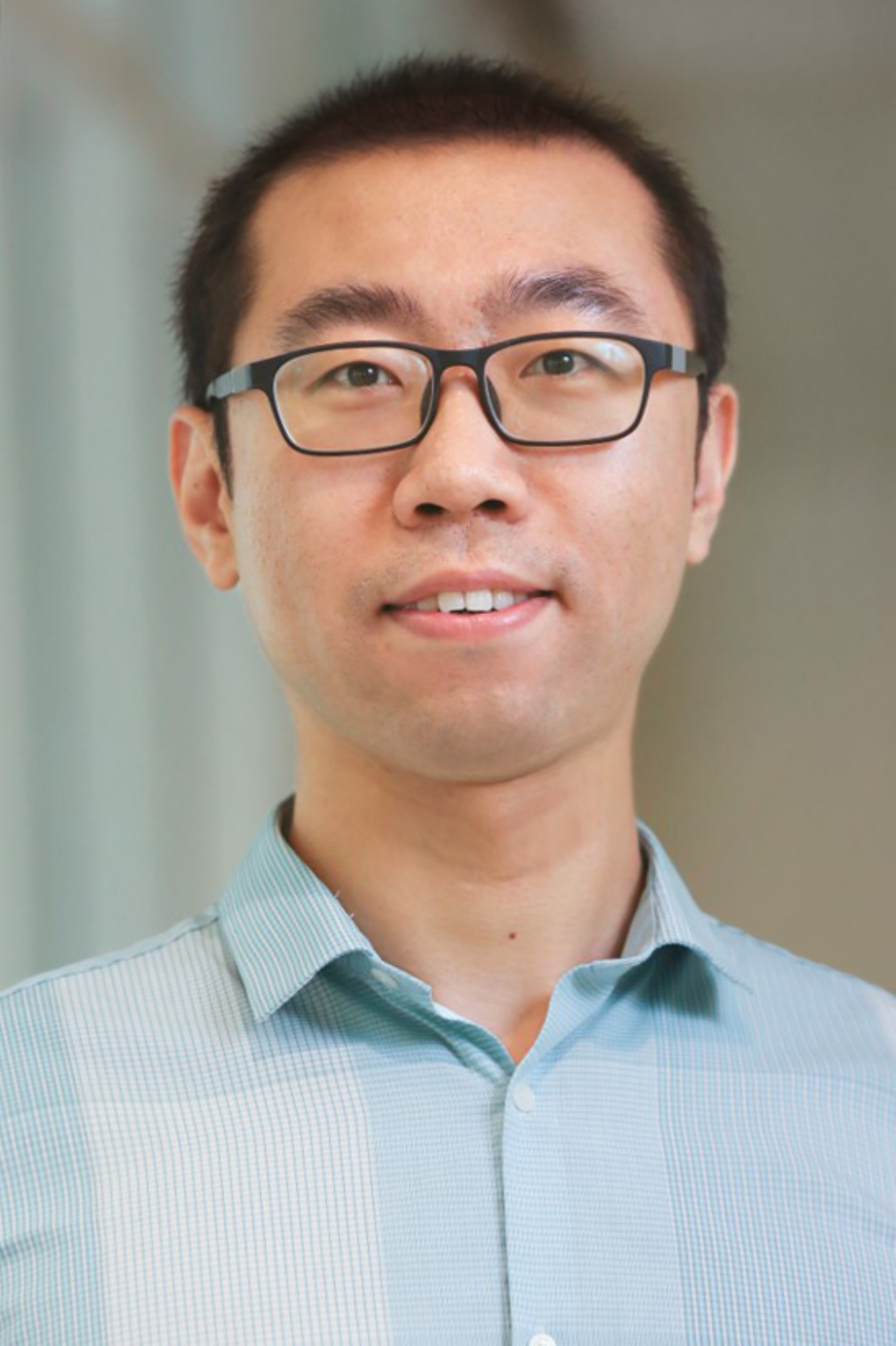}}]{Hongfu Liu}
received his bachelor and master degree in Management Information Systems from the School of Economics and Management, Beihang University, in 2011 and 2014 respectively. He received the Ph.D. degree in computer engineering from Northeastern University, Boston MA, 2018. Currently he is a tenure-track Assistant Professor affiliated with Michtom School of Computer Science at Brandeis University. His research interests generally focus on data mining and machine learning, with special interests in ensemble learning. He has served as the reviewers for many IEEE Transactions journals including TKDE, TNNLS, TIP, and TBD. He has also served on the program committee for the conferences including AAAI, IJCAI, and NIPS. He is the Associate Editor of IEEE Computational Intelligence Magazine.
\end{IEEEbiography}
\vspace{-0.5cm}

\begin{IEEEbiography}[{\includegraphics[width=1in,height=1.25in,clip,keepaspectratio]{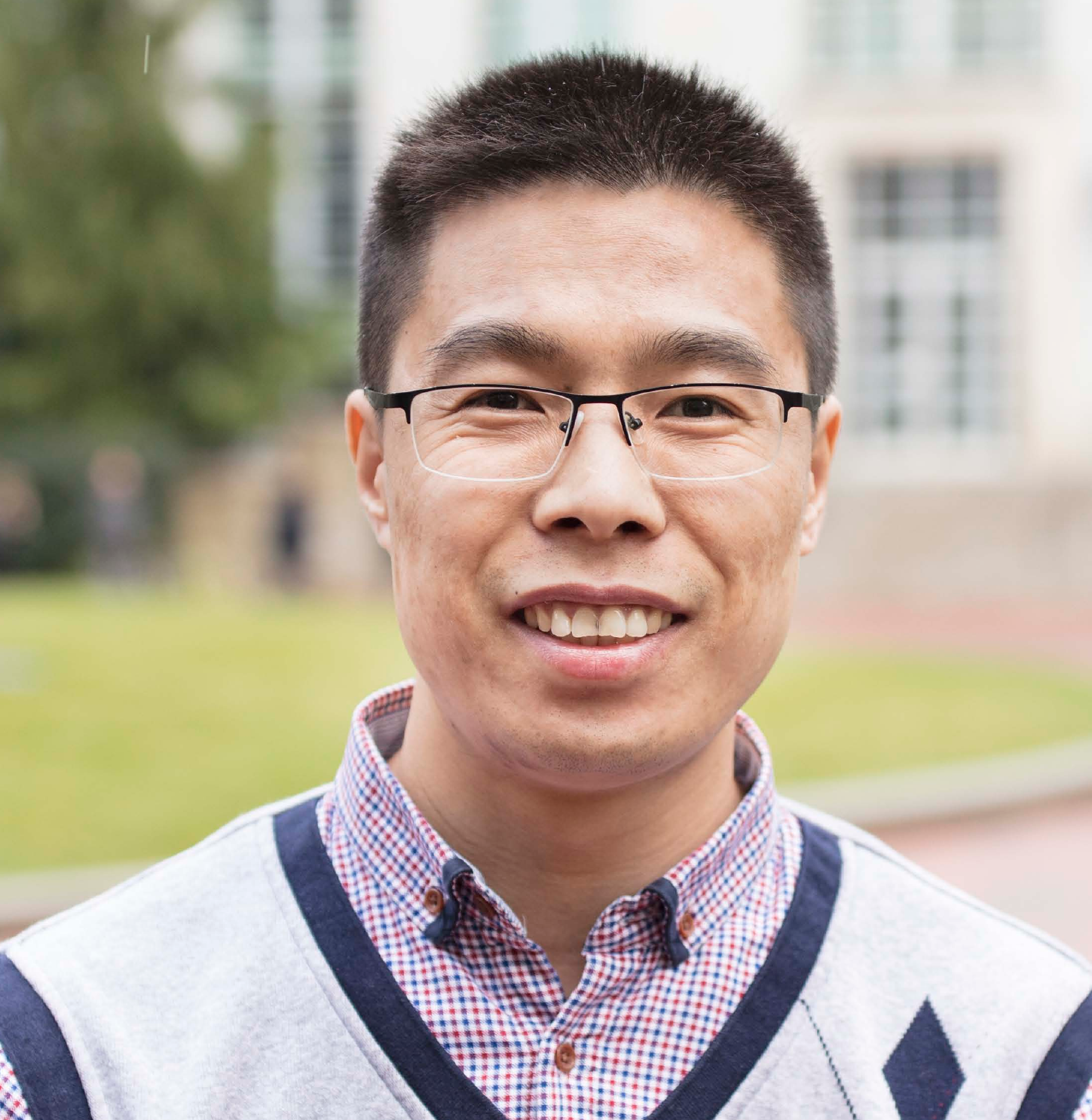}}]{Jun Li} (M'16) received the B.A. in Applied Mathematics from Pan Zhi Hua University in 2006. He received the M.S. in Computer Application from China West Normal University in 2009 and the PhD degree in pattern recognition and intelligence systems from the Nanjing University of Science and Technology in 2015. From Oct. 2012 to July 2013, he was a visiting student at Department of Statistics, Rutgers University, Piscataway, NJ, USA. From Dec. 2015 to Oct. 2018, he was a postdoctoral associate with the Department of Electrical and Computer Engineering, Northeastern University, Boston, MA, USA. He is currently a postdoctoral associate with the Institute of Medical Engineering and Science, Massachusetts Institute of Technology, Cambridge, MA, USA. He is the Associate Editor of IEEE Access. His current research interests include deep learning, reinforcement learning, sparse representations, subspace clustering and recurrent neural networks.
\end{IEEEbiography}
\vspace{-0.5cm}

\begin{IEEEbiography}[{\includegraphics[width=1in,height=1.25in,clip,keepaspectratio]{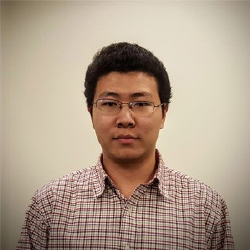}}]{Yue Wu} received the BS and MS degree in Beijing University of Posts and Telecommunications at 2013 and 2016. He is currently a PhD student at Northeastern University. His current research interests are face recognition, object detection and deep learning.
\end{IEEEbiography}
\vspace{-0.5cm}

\begin{IEEEbiography}[{\includegraphics[width=1in,height=1.25in,clip,keepaspectratio]{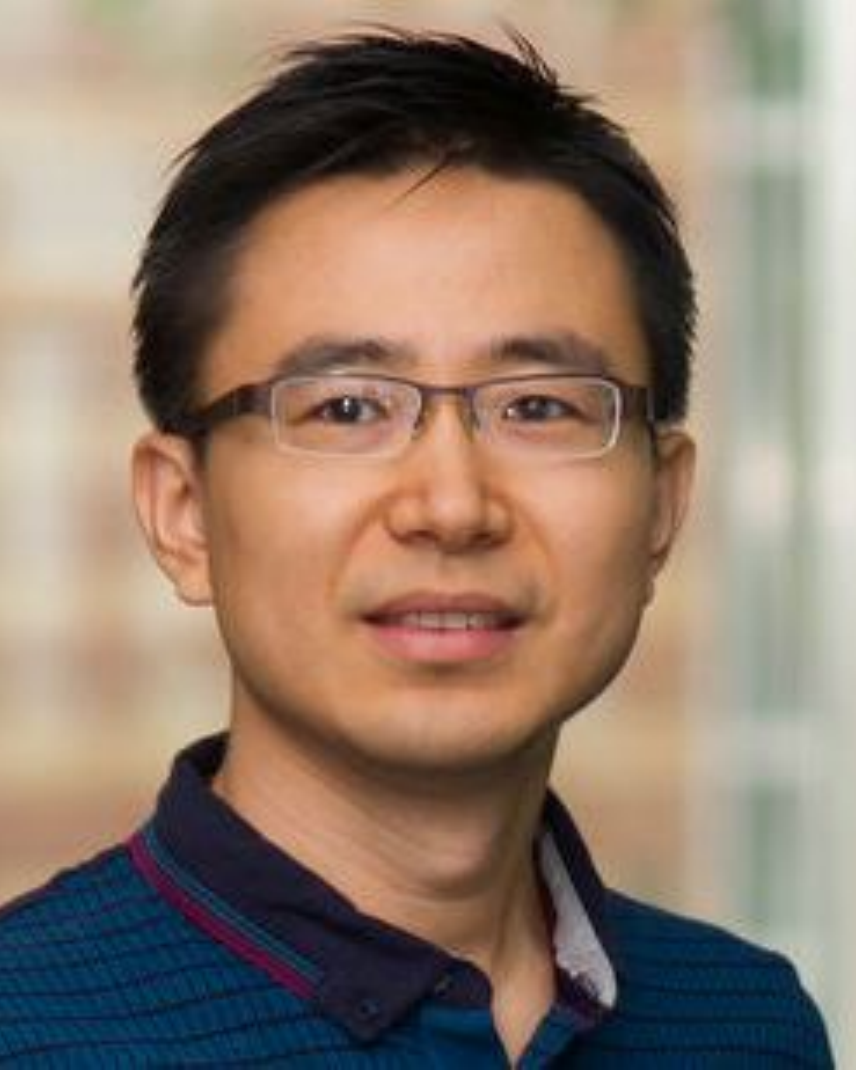}}]{Yun Fu} (S'07-M'08-SM'11-F'19) received the B.Eng. degree in information engineering and the M.Eng. degree in pattern recognition and intelli- gence systems from Xian Jiaotong University, China, respectively, and the M.S. degree in statistics and the Ph.D. degree in electrical and computer engineering from the University of Illinois at Urbana- Champaign, respectively. He is an interdisciplinary faculty member affiliated with College of Engineering and the College of Computer and Information Science at Northeastern University since 2012. His research interests are Machine Learning, Computational Intelligence, Big Data Mining, Computer Vision, Pattern Recognition, and Cyber-Physical Systems. He has extensive publications in leading journals, books/book chapters and international conferences/workshops. He serves as associate editor, chairs, PC member and reviewer of many top journals and international conferences/workshops. He received seven Prestigious Young Investigator Awards from NAE, ONR, ARO, IEEE, INNS, UIUC, Grainger Foundation; nine Best Paper Awards from IEEE, IAPR, SPIE, SIAM; many major Industrial Research Awards from Google, Samsung, and Adobe, etc. He is currently an Associate Editor of the IEEE Transactions on Neural Networks and Leaning Systems (TNNLS). He is fellow of IEEE, IAPR, OSA and SPIE, a Lifetime Distinguished Member of ACM, Lifetime Member of AAAI, and Institute of Mathematical Statistics, member of AAAS, ACM Future of Computing Academy, Global Young Academy (GYA), INNS and Beckman Graduate Fellow during 2007-2008.
\end{IEEEbiography}

\end{document}